\renewcommand{\citet}{\cite}
\definecolor{darkred}{rgb}{0.55, 0.0, 0.0}
\definecolor{darkblue}{rgb}{0.0, 0.0, 0.55}
\newcommand{\first}[1]{\textbf{\textcolor{Goldenrod}{#1}}}
\newcommand{\second}[1]{\textbf{\textcolor{DarkSlateGray}{#1}}}
\newcommand{\third}[1]{\textbf{\textcolor{SaddleBrown}{#1}}}
\definecolor{codegreen}{rgb}{0,0.6,0}
\definecolor{codegray}{rgb}{0.5,0.5,0.5}
\definecolor{codepurple}{rgb}{0.58,0,0.82}
\definecolor{backcolour}{rgb}{0.95,0.95,0.92}
\lstdefinestyle{mystyle}{
    backgroundcolor=\color{backcolour}, 
    commentstyle=\color{codegreen},
    keywordstyle=\color{magenta},
    numberstyle=\tiny\color{codegray},
    stringstyle=\color{codepurple},
    basicstyle=\ttfamily\scriptsize,
    breakatwhitespace=false, 
    breaklines=true,
    captionpos=b, 
    keepspaces=true,
    showspaces=false, 
    showstringspaces=false,
    showtabs=false, 
    tabsize=4
}
\patchcmd{\hyper@makecurrent}{%
    \ifx\Hy@param\Hy@chapterstring
        \let\Hy@param\Hy@chapapp
    \fi
}{%
    \iftoggle{inappendix}{
        \@checkappendixparam{chapter}%
        \@checkappendixparam{section}%
        \@checkappendixparam{subsection}%
        \@checkappendixparam{subsubsection}%
        \@checkappendixparam{paragraph}%
        \@checkappendixparam{subparagraph}%
    }{}%
}{}{\errmessage{failed to patch}}
\newcommand*{\@checkappendixparam}[1]{%
    \def\@checkappendixparamtmp{#1}%
    \ifx\Hy@param\@checkappendixparamtmp
        \let\Hy@param\Hy@appendixstring
    \fi
}
\apptocmd{\appendix}{\toggletrue{inappendix}}{}{\errmessage{failed to patch}}
\numberwithin{equation}{section}
\newtheorem{lemma}{Lemma}[section]
\newtheorem{theorem}{Theorem}[section]
\newcommand\Autoref[1]{\@first@ref#1,@}
\def\@throw@dot#1.#2@{#1}
\def\@set@refname#1{
    \edef\@tmp{\getrefbykeydefault{#1}{anchor}{}}%
    \xdef\@tmp{\expandafter\@throw@dot\@tmp.@}%
    \ltx@IfUndefined{\@tmp autorefnameplural}%
         {\def\@refname{\@nameuse{\@tmp autorefname}s}}%
         {\def\@refname{\@nameuse{\@tmp autorefnameplural}}}%
}
\def\@first@ref#1,#2{%
  \ifx#2@\autoref{#1}\let\@nextref\@gobble
  \else%
    \@set@refname{#1}
    \@refname~\ref{#1}
    \let\@nextref\@next@ref
  \fi%
  \@nextref#2%
}
\def\@next@ref#1,#2{%
   \ifx#2@ and~\ref{#1}\let\@nextref\@gobble
   \else, \ref{#1}
   \fi%
   \@nextref#2%
}
\newcommand{\lr}[3]{\left#1#2\right#3}
\newcommand{\rb}[1]{\lr{(}{#1}{)}}
\renewcommand{\sb}[1]{\lr{[}{#1}{]}}
\newcommand{\cb}[1]{\lr{\{}{#1}{\}}}
\newcommand{\eg}{e.g.\;}
\newcommand{\wrt}{w.r.t.\;}
\newcommand{\ie}{i.e.\;}
\newcommand{\iid}{i.i.d.\;}
\newcommand{\vertices}{\cV}
\newcommand{\edges}{\cE}
\newcommand{\graph}{\cG}
\newcommand{\unweightedgraph}{\graph = \rb{\vertices,\edges}}
\newcommand{\adjacency}{\bm{A}}
\newcommand{\degree}{\bm{D}}
\newcommand{\laplacian}{\bm{L}}
\newcommand{\propagation}{\hat{\adjacency}}
\newcommand{\transition}{\bm{P}}
\newcommand{\mask}{\bM}
\newcommand{\inline}[1]{\text{#1}}
\newcommand{\mpnn}{\mathsf{MPNN}}
\renewcommand{\message}{\mathsf{Msg}}
\newcommand{\aggregate}{\mathsf{Agg}}
\newcommand{\update}{\mathsf{Upd}}
\newcommand{\readout}{\mathsf{Out}}
\newcommand{\paths}{\mathsf{Paths}}
\newcommand{\hopnbr}{\mathbb{S}}
\newcommand{\recfld}{\mathbb{B}}
\newcommand{\hidden}{H}
\newcommand{\feature}{\bm{x}}
\newcommand{\features}{\bm{X}}
\newcommand{\representation}{\bm{z}}
\newcommand{\representations}{\bm{Z}}
\newcommand{\weights}{\bm{W}}
\newcommand{\layer}[1]{^{\rb{#1}}}
\newcommand{\sym}{\text{sym}}
\newcommand{\asym}{\text{asym}}
\newcommand{\relu}{\mathsf{ReLU}}
\newcommand{\uniform}{\cU}
\newcommand{\expectation}{\mathbb{E}}
\newcommand{\real}{\mathbb{R}}
\renewcommand{\natural}{\mathbb{N}}
\newcommand{\Bernoulli}{\text{Bern}}
\newcommand{\Binomial}{\text{Binom}}
\newcommand{\ones}{\mathbf{1}}
\newcommand{\zeros}{\mathbf{0}}
\newcommand{\identity}{\bm{I}}
\newcommand{\factorial}[1]{\lr{(}{#1}{)!}}
\newcommand{\pinv}{^\dagger}
\newcommand{\tdiag}[1]{\text{diag}\rb{#1}}
\newcommand{\norm}[1]{\lr{\|}{#1}{\|}}
\newcommand{\bM}{\mathbf{M}}
\newcommand{\cE}{\mathcal{E}}
\newcommand{\cG}{\mathcal{G}}
\newcommand{\cU}{\mathcal{U}}
\newcommand{\cV}{\mathcal{V}}
\begin{document}

\doparttoc 
\faketableofcontents 
\part{} 

\title{Effects of Dropout on Performance in \\ Long-range Graph Learning Tasks}

\author{
Jasraj Singh\thanks{Part of the work done as a master's student at UCL. Correspondence at \href{mailto:jasraj.singh00150@gmail.com}{\texttt{jasraj.singh00150@gmail.com}}.} \\
Independent Researcher \\
\And Keyue Jiang \\
University College London \\
\AND Brooks Paige \\
University College London \\
\And Laura Toni \\
University College London \\
}


\newcommand{\fix}{\marginpar{FIX}}
\newcommand{\new}{\marginpar{NEW}}

\maketitle
\begin{abstract}
    
    Message Passing Neural Networks (MPNNs) are a class of Graph Neural Networks (GNNs) that propagate information across the graph via local neighborhoods. The scheme gives rise to two key challenges: \emph{over-smoothing} and \emph{over-squashing}. While several Dropout-style algorithms, such as DropEdge and DropMessage, have successfully addressed over-smoothing, their impact on over-squashing remains largely unexplored. This represents a critical gap in the literature, as failure to mitigate over-squashing would make these methods unsuitable for long-range tasks -- the intended use case of deep MPNNs. In this work, we study the aforementioned algorithms, and closely related edge-dropping algorithms -- \inline{DropNode}, \inline{DropAgg} and \inline{DropGNN} -- in the context of over-squashing. We present theoretical results showing that DropEdge-variants reduce sensitivity between distant nodes, limiting their suitability for long-range tasks. To address this, we introduce DropSens, a sensitivity-aware variant of DropEdge that explicitly controls the proportion of information lost due to edge-dropping, thereby increasing sensitivity to distant nodes despite dropping the same number of edges.
    Our experiments on long-range synthetic and real-world datasets confirm the predicted limitations of existing edge-dropping and feature-dropping methods. Moreover, DropSens consistently outperforms graph rewiring techniques designed to mitigate over-squashing, suggesting that simple, targeted modifications can substantially improve a model's ability to capture long-range interactions.
    Our conclusions highlight the need to re-evaluate and re-design existing methods for training deep \inline{GNNs}, with a renewed focus on modelling long-range interactions. 
\end{abstract}
\section{Introduction}

\textit{Graph neural networks} (\inline{GNNs}) \citep{scarselli_2009_gnns,li_2016_graph-seq-nns} are powerful neural models developed for modelling graph-structured data, and have found applications in several real-world scenarios \citep{gao2018large,you_2020_l2-gcn,you_2020_contrastive,you_2022_byov,pmlr-v119-you20a,ying_2018_gcnn-rcmd,zheng2022cold,NIPS2017_2eace51d,marinka_2017_multicellular,wale2006chemical}. 
A popular class of \inline{GNNs}, called \textit{message-passing neural networks} (\inline{MPNNs}) \citep{gilmer2017mpnn}, recursively process 
neighborhood information using message-passing layers. These layers are stacked to allow each node to aggregate information from increasingly larger neighborhoods, akin to how \textit{convolutional neural networks} (\inline{CNNs}) learn hierarchical features for images \citep{cnns_1989_lecun}. However, unlike in image-based deep learning, where \textit{ultra-deep} \inline{CNN} architectures have led to performance breakthroughs \citep{szegedy_2015_deeper,he_2016_residual}, shallow \inline{GNNs} often outperform deeper models on many graph learning tasks \citep{zhou2021understanding}. This is because deep \inline{GNNs} suffer from unique issues like \textit{over-smoothing} \citep{oono2020graph} and \textit{over-squashing} \citep{alon2021on}, which makes training them notoriously difficult.

Over-smoothing refers to the problem of node representations becoming \textit{too similar} as they are recursively processed. This is undesirable since it limits the \inline{GNN} from effectively utilizing the information in the input features. The problem has garnered significant attention from the research community, resulting in a suite of algorithms designed to address it \cite{rusch2023surveyoversmoothinggraphneural} (see \autoref{sec:over-smoothing} for an overview of representative methods). Amongst these methods are a collection of random edge-dropping algorithms, including \inline{DropEdge} \citep{rong2020dropedge}, DropNode \citep{feng2020dropnode}, DropAgg \cite{jiang2023dropagg} and DropGNN \citep{papp2021dropgnn} -- which we will collectively refer to as \textit{DropEdge-variants} -- which act as \textit{message-passing reducers}. In addition, we have DropMessage \cite{Fang2022DropMessageUR}, which performs Dropout \cite{srivastava2014dropout} on the message matrices, instead of the feature matrices; we will collectively refer to these two methods as \textit{Dropout-variants} since they are applied along the feature dimensions.

The other issue specific to GNNs is over-squashing. In certain graph structures, neighborhood size grows exponentially with distance from the source \citep{chen2018fastgcn}, causing information to be lost as it passes through graph bottlenecks \citep{alon2021on}. This limits MPNNs' ability to enable communication between distant nodes, which is crucial for good performance on long-range tasks.
To alleviate over-squashing, several graph-rewiring techniques have been proposed, which aim to improve graph connectivity by adding edges in a strategic manner \citep{karhadkar2023fosr,deac2022expander,black2023resistance,nguyen2023revisiting,alon2021on} (see \autoref{sec:over-squashing} for an overview of representative methods).\footnote{Sometimes, along with removal of some edges to preserve statistical properties of the original topology.} In contrast, the DropEdge-variants only remove edges, which should, in principle, amplify over-squashing levels. The same can be intuitively argued about Dropout-variants. 

Empirical evidence in support of methods designed for training deep \inline{GNNs} has been majorly collected on short-range tasks (see \autoref{sec:eval-limit} for a detailed discussion). That is, it simply suggests that \emph{these methods prevent loss of local information, but it remains inconclusive if they facilitate capturing long-range interactions} (LRIs). Of course, on long-range tasks, deep \inline{GNNs} are useless if they cannot capture LRIs. This is especially a concern for \inline{DropEdge}-variants since evidence suggests that alleviating over-smoothing with graph rewiring could exacerbate over-squashing \citep{nguyen2023revisiting,giraldo2023trading}.

\textbf{Contributions.} In this work, we precisely characterize the effects of random edge-dropping algorithms on over-squashing in \inline{MPNNs}. By explicitly computing the expected \textit{sensitivity} of the node representations to the node features \citep{topping2022understanding} (inversely related to over-squashing) in a linear Graph Convolutional Network (\inline{GCN}) \cite{kipf2017gcn}, we show that these methods provably reduce the \textit{effective receptive field} of the model. Precisely speaking, the rate at which sensitivity between nodes decays is \emph{exponential} \wrt the distance between them
. We also extend the existing theoretical results on sensitivity in nonlinear \inline{MPNNs} \cite{black2023resistance,di2023over,xu2018jknet} to the random edge-dropping setting, concluding that these algorithms exacerbate the over-squashing problem. We use our analysis of GCNs to design a sensitivity-aware DropEdge-variant, named \textit{DropSens}, that enjoys the representational expressivity of DropEdge without suffering from over-squashing, thereby demonstrating how algorithms can be readily adapted for long-range tasks.

We evaluated the DropEdge- and Dropout-variants on long-range datasets using GCN, Graph Isomorphism Network (GIN) \cite{xu2019powerfulgraphneuralnetworks} and Graph Attention Network (GAT) \cite{veličković2018gat} architectures. Specifically, we follow the setup in \citet{giovanni2024how} with the SyntheticZINC dataset, in \citet{topping2022understanding} with real-world homophilic (corresponding to short-range tasks) and heterophilic (long-range tasks) node classification datasets, and in \citet{black2023resistance,karhadkar2023fosr} with graph classification datasets.
Our results indicate that while the random dropping methods improve model performance in short-range tasks, they are often ineffective, and sometimes even detrimental, to long-range task performance.
Finally, we present results for DropSens, which outperforms state-of-the-art graph rewiring methods aimed at addressing over-squashing at node classification and graph-classification tasks.
These findings point to the importance of re-evaluating the methods used to train deep GNNs, especially in terms of how well they capture LRIs.
\section{Background}

Consider a directed graph $\unweightedgraph$, with $\vertices = \sb{N} \coloneqq \cb{1,\ldots,N}$ denoting the node set and $\edges\subset\vertices\times\vertices$ the edge set; $\rb{j \to i} \in \edges$ if there's an edge from node $j$ to node $i$. Let $\adjacency \in \cb{0,1}^{N\times N}$ denote its adjacency matrix, such that $\adjacency_{ij} = 1$ if and only if $\rb{j \to i} \in \edges$, and let $\degree \coloneqq \tdiag{\adjacency\ones_N}$ denote the in-degree matrix. The geodesic distance, $d_{\graph}\rb{j, i}$, from node $j$ to node $i$ is the length of the shortest path starting at node $j$ and ending at node $i$. Accordingly, the $\ell$-hop neighborhood of a node $i$ can be defined as the set of nodes that can reach it in exactly $\ell\in\natural_0$ steps, $\hopnbr^{\rb{\ell}}\rb{i} = \cb{j\in\vertices: d_{\graph}\rb{j,i} = \ell}$. 

\subsection{Graph Neural Networks}
\label{sec:gnns}

Graph Neural Networks (\inline{GNNs}) operate on inputs of the form $\rb{\graph,\features}$, where $\graph$ encodes the graph topology and $\features \in \real^{N\times \hidden\layer{0}}$ collects the node features.\footnote{To keep things simple, we will ignore edge features.} Message-Passing Neural Networks (\inline{MPNNs}) \cite{gilmer2017mpnn} are a special class of \inline{GNNs} which recursively aggregate information from the 1-hop neighborhood of each node using \textit{message-passing layers}. An L-layer \inline{MPNN} is given as
\begin{align}
\begin{split}
    &\representation_i\layer{\ell} 
    =
    \update\layer{\ell} \rb{\representation_i\layer{\ell-1}, \aggregate\layer{\ell} \rb{\representation_i\layer{\ell-1}, \cb{\representation_j\layer{\ell-1}: j\in\hopnbr^{\rb{1}}\rb{i}}}}, \quad \forall \ell \in \sb{L} \\
    &\mpnn_{\theta}\rb{\graph,\features} = \cb{\readout\rb{\representation_i\layer{L}}: i\in\vertices}
\end{split}
\end{align}
where $\representations\layer{0} = \features$, $\aggregate\layer{\ell}$ denotes the \textit{aggregation functions}, $\update\layer{\ell}$ the \textit{update functions}, and $\readout$ the \textit{readout function}. Since $\representation_i\layer{L}$ is a function of the input features of nodes at most L-hops away from it, its \textit{receptive field} is given by $\recfld^{\rb{L}}\rb{i} \coloneqq \cb{j\in\vertices: d_{\graph}\rb{j,i} \leq L}$.

For example, a \inline{GCN} \cite{kipf2017gcn} updates node representations as the weighted sum of its neighbors' representations: 
\begin{align}
    \representations\layer{\ell} = \sigma\rb{\propagation \representations\layer{\ell-1}\weights\layer{\ell}}
\end{align}
where $\sigma$ is a point-wise nonlinearity, \eg \inline{ReLU}, the propagation matrix, $\propagation$, is a \textit{graph shift operator}, \ie $\propagation_{ij} \neq 0$ if and only if $\rb{j \to i} \in \edges$ or $i=j$, and $\weights\layer{\ell} \in \real^{\hidden\layer{\ell-1}\times\hidden\layer{\ell}}$ is a weight matrix. The original choice for $\propagation$ was the symmetrically normalized adjacency matrix $\propagation^\sym \coloneqq \widetilde{\degree}^{-1/2} \widetilde{\adjacency} \widetilde{\degree}^{-1/2}$ \cite{kipf2017gcn}, where $\widetilde{\adjacency} = \adjacency + \identity_N$ and $\widetilde{\degree} = \text{diag}(\widetilde{\adjacency}\ones_N)$. However, several influential works have also used the asymmetrically normalized adjacency, $\propagation^\asym \coloneqq \widetilde{\degree}^{-1} \widetilde{\adjacency}$ \cite{hamilton2017ppi,schlichtkrull2017modelingrelationaldatagraph,Li_Han_Wu_2018}.

\subsection{DropEdge-variants}
\label{sec:drop-edge}

\inline{DropEdge} \cite{rong2020dropedge} is a random data augmentation technique that works by sampling a subgraph of the input graph in each layer, followed by the addition of self-loops, and uses that for message passing.
Several variants of \inline{DropEdge} have also been proposed, forming a family of random edge-dropping algorithms for tackling the over-smoothing problem. For example, DropNode \citep{feng2020dropnode} independently samples nodes and sets their features to $\zeros$, followed by rescaling to make the feature matrix unbiased. This is equivalent to setting the corresponding columns of the propagation matrix to $\zeros$.
In a similar vein, DropAgg \citep{jiang2023dropagg} samples nodes that don't aggregate messages from their neighbors. This is equivalent to dropping the corresponding rows of the adjacency matrix.
Combining these two approaches, DropGNN \citep{papp2021dropgnn} samples nodes which neither propagate nor aggregate messages in a given layer.
These algorithms alleviate over-smoothing by reducing the number of messages being propagated in the graph, thereby slowing down the convergence of node representations.

\subsection{Dropout-variants}

Dropout is a stochastic regularization technique which reduces over-fitting by randomly dropping features before each layer. It has been successful with various architectures, like CNNs \cite{srivastava2014dropout} and transformers \cite{vaswani2017transformers}, and has also found applications in GNN training. DropMessage \cite{Fang2022DropMessageUR} is a variant of Dropout designed specifically for message-passing schemes -- it acts directly on the messages over each edge, instead of the node representations. This reduces the induced variance in the messages compared to Dropout, DropEdge and DropNode, while at the same time making the method more effective at alleviating over-smoothing and enabling the training of deep GNNs.

\subsection{Over-squashing}

Over-squashing refers to the problem of information from exponentially growing neighborhoods \cite{chen2018stochastic} being squashed into finite-sized node representations \cite{alon2021on}. 
\citet{topping2022understanding} formally characterized over-squashing in terms of the Jacobian of the node-level representations \wrt the input features: $\|\partial \representation\layer{L}_i / \partial \feature_j\|_1$. Accordingly, over-squashing can be understood as low sensitivity between distant nodes, \ie small perturbations in a node's features don't effect other distant nodes' representations. 


See \autoref{sec:related} for an extensive discussion of related works addressing the problems of over-smoothing and over-squashing, and a unified treatment of the two.
\section{Sensitivity Analysis}
\label{sec:theory}

In this section, we perform a theoretical analysis of the expectation -- \wrt random edge masks -- of sensitivity of node representations. This will allow us to predict how \inline{DropEdge}-variants affect communication between nodes at various distances, which is relevant for predicting their suitability towards learning LRIs.


Here, we present our analysis for linear \inline{GCNs}, and treat more general nonlinear \inline{MPNN} architectures in \autoref{sec:nonlinear}. 
In this model, the final node representations can be summarised as
\begin{align}
    \representations\layer{L} = \rb{\prod_{\ell=1}^L \propagation\layer{\ell}} \features \weights \in \real^{N\times \hidden\layer{L}}
\end{align}

where 
$\weights \coloneqq \prod_{\ell=1}^L \weights\layer{\ell} \in \real^{\hidden\layer{0} \times \hidden\layer{L}}$. Using the \iid assumption on the distribution of edge masks in each layer, the expected sensitivity of node $i$ to node $j$ can be shown to be
\begin{align}
    \expectation_{\mask\layer{1}, \ldots, \mask\layer{L}} \sb{\norm{\frac{\partial \representation\layer{L}_i}{\partial \feature_j}}_1} = \rb{\expectation\sb{\propagation}^L}_{ij} \norm{\weights}_1 
\label{eqn:sensitivity-l-layer}
\end{align}

To keep things simple, we will ignore the effect of DropEdge-variants on the optimization trajectory. Accordingly, it is sufficient to study $\expectation[\propagation]$ in order to predict their effect on over-squashing. To maintain analytical tractability, we assume the use of an asymmetrically normalized adjacency matrix for message-passing, $\propagation = \propagation^{\asym}$. \\

\begin{lemma}
\label{thm:exp-prop-matrix}
    The expected propagation matrix under \inline{DropEdge} is given as:
    \begin{align}
    \begin{split}
        \dot{\transition}_{ii} &\coloneqq \expectation_{\mathsf{DE}}\sb{\hat{\adjacency}_{ii}} = \frac{1-q^{d_i+1}}{\rb{1-q}\rb{d_i+1}} \\
        \dot{\transition}_{ij} &\coloneqq \expectation_{\mathsf{DE}}\sb{\hat{\adjacency}_{ij}} = \frac{1}{d_i}\rb{1 - \frac{1-q^{d_i+1}}{\rb{1-q}\rb{d_i+1}}}
    \end{split} \label{eqn:dropedge}
    \end{align}
    where $q\in[0, 1)$ is the dropping probability. 
\end{lemma}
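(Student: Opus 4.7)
The plan is to compute the diagonal entry $\dot{\transition}_{ii}$ directly via a standard binomial moment identity, and then obtain the off-diagonal entry $\dot{\transition}_{ij}$ almost for free by exploiting the row-stochasticity of $\propagation^{\asym}$ together with the exchangeability of the incoming edges to node $i$.

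First I would set up the distribution of the randomness in one row. Under \inline{DropEdge} with drop probability $q$, each of the $d_i$ incoming edges to $i$ is retained independently with probability $1-q$; after adding the self-loop and asymmetric normalization, the $i$-th row of $\propagation$ is uniform over the surviving in-neighbors together with $i$ itself. Concretely, letting $K_i \sim \Binomial\rb{d_i, 1-q}$ count the surviving in-neighbors of $i$, one has $\hat{\adjacency}_{ii} = 1/\rb{K_i+1}$ deterministically, while for each original in-neighbor $j$, $\hat{\adjacency}_{ij} = B_{ij}/\rb{K_i+1}$, where $B_{ij}\sim \Bernoulli\rb{1-q}$ is the indicator that the edge $\rb{j\to i}$ survives and $K_i = \sum_{k} B_{ik}$.

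Next I would evaluate $\expectation\sb{1/\rb{K_i+1}}$. Writing the expectation as a sum and applying the identity $\tfrac{1}{k+1}\binom{n}{k} = \tfrac{1}{n+1}\binom{n+1}{k+1}$ with $n=d_i$ and $p=1-q$,
\begin{align*}
    \expectation\sb{\tfrac{1}{K_i+1}}
    = \tfrac{1}{\rb{d_i+1}\rb{1-q}} \sum_{k=0}^{d_i} \binom{d_i+1}{k+1} \rb{1-q}^{k+1} q^{d_i-k}
    = \tfrac{1-q^{d_i+1}}{\rb{1-q}\rb{d_i+1}},
\end{align*}
where the last equality follows by re-indexing the sum to range over $j=k+1\in\cb{1,\ldots,d_i+1}$ and completing it to the full binomial expansion of $\rb{\rb{1-q}+q}^{d_i+1} = 1$, subtracting off the missing $j=0$ term $q^{d_i+1}$. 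This gives the stated formula for $\dot{\transition}_{ii}$.

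Finally I would read off $\dot{\transition}_{ij}$ from row-stochasticity. For every realization of the edge masks, $\propagation^{\asym}$ is row-stochastic, so $\hat{\adjacency}_{ii} + \sum_{j\in\hopnbr^{\rb{1}}\rb{i}} \hat{\adjacency}_{ij} = 1$ almost surely. Taking expectations and using the fact that, by construction, the incoming edges to $i$ are exchangeable under \inline{DropEdge}, so $\expectation\sb{\hat{\adjacency}_{ij}}$ takes the same value $\dot{\transition}_{ij}$ for every $j\in\hopnbr^{\rb{1}}\rb{i}$, gives $\dot{\transition}_{ii} + d_i \dot{\transition}_{ij} = 1$. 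Solving for $\dot{\transition}_{ij}$ yields the claimed formula. The main obstacle is essentially just the binomial moment computation in the second step; the symmetry argument sidesteps a second, messier Binomial calculation involving $\expectation\sb{1/\rb{K'+2}}$ with $K'\sim\Binomial\rb{d_i-1,1-q}$.
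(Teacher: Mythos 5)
Your proposal is correct. The computation of $\dot{\transition}_{ii}$ is essentially identical to the paper's: both reduce $\expectation\sb{1/\rb{K_i+1}}$ with $K_i\sim\Binomial\rb{d_i,1-q}$ to a shifted binomial sum via $\tfrac{1}{k+1}\binom{d_i}{k}=\tfrac{1}{d_i+1}\binom{d_i+1}{k+1}$ and complete it to $\rb{\rb{1-q}+q}^{d_i+1}=1$ minus the missing $q^{d_i+1}$ term. Where you genuinely diverge is the off-diagonal entry. The paper conditions on the survival of the edge $\rb{j\to i}$ and then grinds through a second, messier binomial manipulation for $\expectation\sb{1/\rb{2+K'}}$ with $K'\sim\Binomial\rb{d_i-1,1-q}$, only observing at the very last line that the answer happens to equal $\tfrac{1}{d_i}\rb{1-\dot{\transition}_{ii}}$. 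You instead obtain that relation a priori from the almost-sure row-stochasticity of the normalized matrix (row sum $\rb{1+K_i}/\rb{1+K_i}=1$) together with the exchangeability of the i.i.d.\ edge masks, which forces all $d_i$ off-diagonal expectations in row $i$ to coincide. Both steps are valid, so your argument is sound and arguably cleaner; what the paper's direct computation buys in exchange is an explicit closed form for $\expectation\sb{1/\rb{2+K'}}$ that does not rely on the row normalization being exactly stochastic, and hence would survive modifications (e.g.\ symmetric normalization) under which the row-sum identity fails. Since the lemma as stated assumes $\propagation=\propagation^{\asym}$, your shortcut is fully justified here.
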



See \autoref{sec:proofs-lemma} for a proof, and a similar treatment of DropNode, DropAgg and DropGNN.

\textbf{1-Layer Linear GCNs.} $\forall q\in\rb{0,1}$ we have
\begin{alignat}{3}
\begin{split}
    \dot{\transition}_{ii} &= \frac{1}{d_i+1} \sum_{k=0}^{d_i} q^k > \frac{1}{d_i+1} \\
    \dot{\transition}_{ij} &= \frac{1}{d_i}\rb{1-\dot{\transition}_{ii}} < \frac{1}{d_i+1}
\end{split} \label{eqn:over-squashing}
\end{alignat}

where the right-hand sides of the two inequalities are the corresponding entries in the propagation matrix of a \inline{NoDrop} model. 
\Autoref{eqn:dropedge,eqn:over-squashing,eqn:dropagg,eqn:dropgnn} together imply the following result:

\begin{lemma} 
\label{thm:sensitivity-1-layer}
    In a 1-layer linear \inline{GCN} with $\propagation = \propagation^{\asym}$, using \inline{DropEdge}, DropAgg or DropGNN
    \begin{enumerate}
        \item increases the sensitivity of a node's representations to its own input features, and
        \item decreases the sensitivity to its neighbors' features.
    \end{enumerate}
\end{lemma}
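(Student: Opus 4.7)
The plan is to observe that for a 1-layer linear \inline{GCN} ($L=1$), \autoref{eqn:sensitivity-l-layer} reduces the expected sensitivity of $\representation\layer{1}_i$ to $\feature_j$ to $\dot{\transition}_{ij}\,\norm{\weights}_1$. Since $\norm{\weights}_1$ is a positive factor common to all $j$, the comparison between the random-dropping sensitivity and the NoDrop sensitivity reduces to comparing the entries of $\dot{\transition}$ against those of the NoDrop propagation matrix $\propagation^\asym$, whose diagonal and in-neighbor entries both equal $\tfrac{1}{d_i+1}$. The lemma thus reduces to two pointwise numerical inequalities for each of the three methods, which I would then handle one by one.

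For \inline{DropEdge}, I would appeal directly to the rewriting in \autoref{eqn:over-squashing}: the sum $\sum_{k=0}^{d_i} q^k$ has leading term $1$ and $d_i$ further non-negative terms, at least one of which is strictly positive whenever $q\in(0,1)$, giving $\dot{\transition}_{ii} > \tfrac{1}{d_i+1}$. The complementary inequality $\dot{\transition}_{ij} < \tfrac{1}{d_i+1}$ for $j\in\hopnbr^{\rb{1}}\rb{i}$ then follows either by reading off \autoref{eqn:over-squashing} directly, or more structurally from the fact that each realisation of $\propagation$ is row-stochastic and thus so is $\dot{\transition}$, forcing the excess mass on the diagonal to be drawn (symmetrically) from the $d_i$ in-neighbors.

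For DropAgg and DropGNN, I would unpack the expressions derived in \autoref{eqn:dropagg} and \autoref{eqn:dropgnn} to show that each $\dot{\transition}_{ii}$ takes the form of a convex combination $q\cdot 1 + (1-q)\cdot\tfrac{1}{d_i+1}$, and each $\dot{\transition}_{ij}$ for $j\in\hopnbr^{\rb{1}}\rb{i}$ takes the form $(1-q)\cdot\tfrac{1}{d_i+1}$. The underlying logic is that with probability $q$ the aggregation at node $i$ is dropped, leaving only the self-loop (contributing $1$ to the diagonal and $0$ to all off-diagonals), while with probability $1-q$ the ordinary $\propagation^\asym$ row is used. The two inequalities then reduce to noting that $q\cdot\tfrac{d_i}{d_i+1} > 0$ and $-q\cdot\tfrac{1}{d_i+1} < 0$ whenever $q\in(0,1)$.

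The main obstacle is bookkeeping rather than mathematical depth: one must verify from the precise definitions in \autoref{sec:drop-edge} that, under DropAgg and DropGNN, the ``aggregation-dropped'' event leaves the self-loop intact, so that the skipped row contributes $1$ rather than $0$ to $\dot{\transition}_{ii}$, and that the DropGNN formula collapses to the same convex-combination shape despite combining a node-level mask with an aggregation-level mask. Once these structural facts are confirmed, the remainder is a short chain of elementary inequalities that depend only on the in-degree $d_i$ and the dropping probability $q$.
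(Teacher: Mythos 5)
Your proposal is correct and follows essentially the same route as the paper: the paper likewise reduces the claim via \autoref{eqn:sensitivity-l-layer} with $L=1$ to comparing the entries of $\expectation[\propagation]$ against $1/(d_i+1)$, reads off the DropEdge case from the geometric-sum rewriting in \autoref{eqn:over-squashing}, and settles DropAgg and DropGNN from \Autoref{eqn:dropagg,eqn:dropgnn}. The only slip is cosmetic: for DropGNN the $(1-q)$-branch of the mixture is the \emph{DropEdge} row rather than the NoDrop row (\ie the diagonal is $q + (1-q)\,\expectation_{\mathsf{DG}\to\mathsf{DE}}$-style, namely $q + (1-q)\expectation_{\mathsf{DE}}[\hat{\adjacency}_{ii}]$, and the off-diagonal is $(1-q)\expectation_{\mathsf{DE}}[\hat{\adjacency}_{ij}]$), but since the DropEdge row already satisfies both inequalities, mixing with the pure self-loop row only pushes further in the same direction and the conclusion is unaffected.
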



\begin{figure}
    \centering
    \subcaptionbox{
        Entries of $\dot{\transition}^6$ decay at exponential rate \wrt distance between
        nodes, and polynomial rate \wrt to the DropEdge probability. \label{fig:linear-gcn_asymmetric}
    }{
        \includegraphics[width=0.6\linewidth]{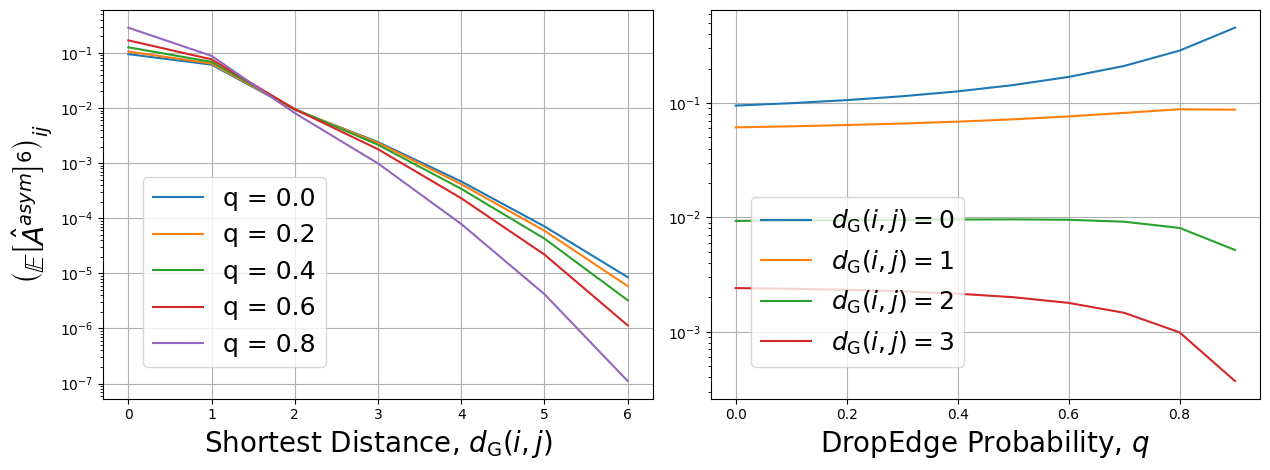}
    }%
    \hspace{0.02\linewidth}%
    \subcaptionbox[0.3\linewidth]{
        MC-approximation of influence
        distribution in ReLU-GCNs. \label{fig:mc-sensitivity}
    }{
        \includegraphics[width=0.3\linewidth]{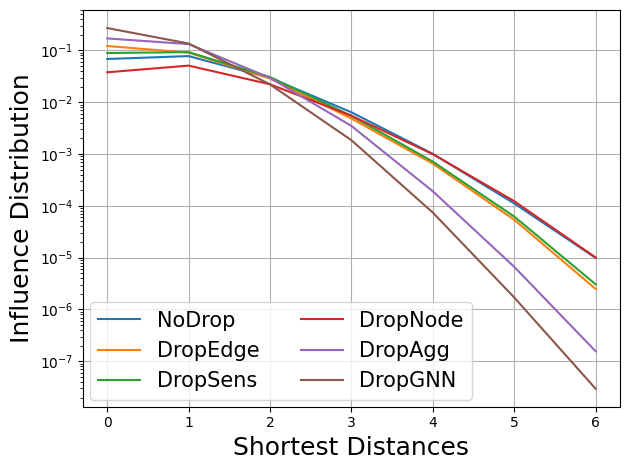}
    }
    \caption{Empirical sensitivity analysis using the Cora dataset.}
    \label{fig:sensitivity-analysis}
\end{figure}

\textbf{L-layer Linear GCNs.} Unfortunately, we cannot draw similar conclusions in L-layer networks, for nodes at arbitrary distances. To see this, view $\dot{\transition}$ as the transition matrix of a non-uniform random walk. 
This walk has higher self-transition ($i = j$) probabilities than in a uniform augmented random walk ($\transition = \propagation^{\asym}$, $q=0$), but lower inter-node ($i \neq j$) transition probabilities. Note that $\dot{\transition}^L$ and $\transition^L$ store the L-step transition probabilities in the corresponding walks. Then, since the paths connecting the nodes $i\in\vertices$ and $j\in\recfld\layer{L-1}\rb{i}$ may involve self-loops, $(\dot{\transition}^L)_{ij}$ may be lower or higher than $(\transition^L)_{ij}$. Therefore, we cannot conclude how sensitivity between nodes separated by at most $L-1$ hops changes. For nodes L-hops away, however, we can show that \inline{DropEdge} always decreases the corresponding entry in $\dot{\transition}^L$, reducing the effective reachability of \inline{GCNs}. Using \Autoref{eqn:dropagg,eqn:dropgnn}, we can show the same for DropAgg and DropGNN, respectively.

\begin{theorem}
\label{thm:sensitivity-l-layer-dec}
    In an L-layer linear \inline{GCN} with $\propagation = \propagation^\asym$, using \inline{DropEdge}, DropAgg or DropGNN decreases the sensitivity of a node $i\in\vertices$ to another node $j\in \hopnbr^{\rb{L}}\rb{i}$, thereby reducing its effective receptive field. Moreover, the sensitivity decreases with increasing dropping probability.
\end{theorem}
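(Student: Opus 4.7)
The plan is to reduce the theorem, via \autoref{eqn:sensitivity-l-layer}, to a strict inequality between the $(i,j)$-entries of the expected propagation matrix power $\dot{\transition}^L$ and its NoDrop baseline $\transition^L$, where $\transition \coloneqq \propagation^\asym$ arises by setting $q=0$. Since the factor $\norm{\weights}_1$ is common to both sides, it suffices to prove $(\dot{\transition}^L)_{ij} < (\transition^L)_{ij}$ whenever $d_\graph(j,i) = L$, and that $(\dot{\transition}^L)_{ij}$ is strictly decreasing in $q$ on the interval $(0,1)$.

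First I would expand both matrix powers combinatorially as sums over length-$L$ walks in the augmented graph, writing
\[
(\dot{\transition}^L)_{ij} \;=\; \sum_{u_0 = i,\,u_1,\,\ldots,\,u_L = j}\; \prod_{k=0}^{L-1} \dot{\transition}_{u_k u_{k+1}},
\]
and likewise for $\transition^L$. The key structural observation is that self-loops cannot shorten graph distance, so the augmented-graph distance from $j$ to $i$ remains $L$; consequently every length-$L$ walk that contributes to either sum must use exactly $L$ non-self-loop edges, since any self-loop would leave at most $L-1$ genuine steps with which to traverse a distance-$L$ gap. Hence the two sums are indexed by the \emph{same} set of non-self-loop walks, and only the per-edge weights differ between the NoDrop and DropEdge regimes.

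Next I would use the identity $\dot{\transition}_{ii} = (d_i+1)^{-1} \sum_{k=0}^{d_i} q^k$ already observed in \autoref{eqn:over-squashing} to conclude that for every non-self-loop edge and every $q \in (0,1)$, $\dot{\transition}_{uv} = (1-\dot{\transition}_{uu})/d_u < 1/(d_u+1) = \transition_{uv}$ strictly. All factors being positive and strictly smaller, the desired inequality $(\dot{\transition}^L)_{ij} < (\transition^L)_{ij}$ follows termwise, establishing the sensitivity decrease for DropEdge. The combinatorial reduction to products over non-self-loop edges is purely graph-structural, so for DropAgg and DropGNN I would simply replay the argument using \Autoref{eqn:dropagg,eqn:dropgnn} to check the corresponding strict inequality on non-self-loop entries of their expected propagation matrices.

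For monotonicity in $q$, the polynomial $\dot{\transition}_{ii} = (d_i+1)^{-1}\sum_{k=0}^{d_i} q^k$ is manifestly strictly increasing in $q$, hence $\dot{\transition}_{ij} = (1 - \dot{\transition}_{ii})/d_i$ is strictly decreasing on every non-self-loop edge; each factor in every walk-product is then a strictly positive, strictly decreasing function of $q$, and a finite sum of such products inherits strict decrease. The main obstacle I anticipate is not the analysis itself, which is essentially bookkeeping once the no-self-loop reduction is in hand, but cleanly verifying the single-step inequality $\dot{\transition}_{uv} < 1/(d_u+1)$ under DropAgg and DropGNN, since their row-dropping conventions could in principle change how self-loops enter the normalisation and thus require a short, scheme-specific calculation to close.
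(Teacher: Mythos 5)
Your proposal is correct and follows essentially the same route as the paper's proof: a walk-expansion of $(\dot{\transition}^L)_{ij}$, the observation that no self-loop can appear on a length-$L$ walk between nodes at distance $L$ so only cross-edge factors occur, the per-edge strict inequality $\dot{\transition}_{uv} < 1/(d_u+1)$ from \autoref{eqn:over-squashing} applied termwise, and monotonicity in $q$ via the geometric series, with \Autoref{eqn:dropagg,eqn:dropgnn} covering DropAgg and DropGNN. No gaps.
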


See \autoref{sec:proofs-thm} for a precise quantitative statement and the proof.

\textbf{Nodes at Arbitrary Distances.} Although no general statement could be made about the change in sensitivity between nodes up to $L-1$ hops away, we can analyze such pairs empirically.
We compute the L-hop transition matrix $\dot{\transition}^L$ -- proportional to expected sensitivity in linear GCNs under DropEdge -- for the Cora dataset, and average the entries after binning node pairs by the shortest distance between them. The results are shown in \autoref{fig:linear-gcn_asymmetric}.
In the left subfigure, we observe that the expected sensitivity decays at an \textit{exponential rate} with increasing distance between the corresponding nodes.
In the middle subfigure, we observe that \inline{DropEdge} increases the expected sensitivity between nodes close to each other (0-hop and 1-hop neighbors) in the original topology, but reduces it between nodes farther off. 
Similar conclusions can be made with the symmetrically normalized propagation matrix (see \autoref{sec:fig-sym-norm}).
Note that the over-squashing effects of DropAgg and DropGNN would, in theory, be even more severe, as suggested by \Autoref{eqn:dropagg,eqn:dropgnn}.

\textbf{Nonlinear MPNNs.} While linear networks are useful in simplifying the theoretical analysis, they are often not practical. In \autoref{sec:nonlinear}, we treat the upper bounds on sensitivity established in previous works, and extend them to the \inline{DropEdge}-variants. Even still, although theoretical bounds offer valuable guarantees, they can be arbitrarily loose in the absence of error quantification, making their practical relevance unclear. To reliably conclude the empirical behaviour of DropEdge- and Dropout-variants, we turn to Monte Carlo simulations with ReLU-GCNs; see \autoref{sec:sensitivity-setup} for a description of the experiment setup. \autoref{fig:mc-sensitivity} compares the influence of the source nodes \cite{xu2018jknet} at different distances using a dropout probability of $0.5$. We observe that while the effect of DropNode on the sensitivity profile -- as compared to the baseline NoDrop -- is relatively insignificant, models using DropEdge, DropAgg and DropGNN have remarkably lower sensitivity to distant nodes, as predicted by our theory.

\section{Sensitivity-Aware DropEdge}
\label{sec:drop-sens}


\autoref{thm:exp-prop-matrix} tells us that DropEdge decreases the weight of cross-edges, $\rb{j\to i}$, in the expected propagation matrix, \ie the \textit{strength of message passing} over these edges decreases. The fraction of \textit{information preserved} over a cross-edge is dependent only on the dropping probability and the target node's in-degree, $d_i$. We can directly control this quantity using a per-edge dropping probability, $q_i$, dependent only on the receiving node's in-degree:
\begin{align}
\begin{split}
    c &= \frac{\expectation_{\mathsf{DE}}[\hat{\adjacency}_{ij}]}{\expectation_{\mathsf{ND}}[\hat{\adjacency}_{ij}]} = \frac{d_i+1}{d_i}\rb{1 - \frac{1-q_i^{d_i+1}}{\rb{1-q_i}\rb{d_i+1}}} \Longrightarrow 1-c 
    = \frac{q_i-q_i^{d_i+1}}{d_i\rb{1-q_i}}
\end{split}
\label{eqn:dropsens}
\end{align}

where $c$ is the fraction of information preserved, \eg 95\%. We can solve for $q_i$ and mask the incoming edges to node $i$ accordingly; we name this algorithm \textit{DropSens}. In \autoref{sec:ds-implementation}, we present a \texttt{Python} implementation of the algorithm, as well as a computationally efficient approximation to \autoref{eqn:dropsens}. In \autoref{fig:mc-sensitivity}, we observe that DropSens improves sensitivity between distant nodes, compared to DropEdge.
\section{Experiments}
\label{sec:exp}

Our theoretical analysis indicates that random dropping may degrade the performance of \inline{GNNs} in tasks that depend on capturing LRIs. In this section, we test this hypothesis by evaluating \inline{DropEdge}- and Dropout-variants on both synthetic and real-world datasets. A complete description of the datasets is provided in \autoref{sec:description}, and the experimental details are in \autoref{sec:config}.

\subsection{Synthetic Datasets}
\label{sec:synthetic}

\begin{figure}
    \centering
    \includegraphics[width=0.9\linewidth]{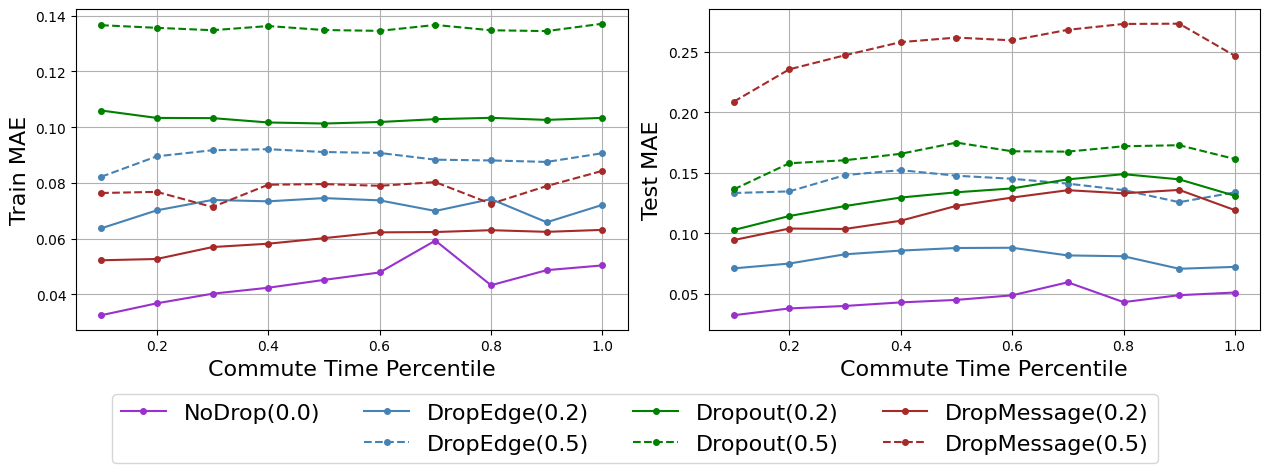}
    \caption{Train and test MAE of 11-layer GCNs on the SyntheticZINC dataset, averaged over 10 initializations.}
    \label{fig:synthetic-zinc}
\end{figure}

SyntheticZINC \cite{giovanni2024how} is a synthetic variant of the ZINC dataset \cite{Irwin2012-wt}, designed to study the effect of information mixing in graph learning. Node features are sparsely assigned, and the target requires non-linear mixing of two selected nodes' features, chosen based on their commute time \cite{Chandra1989TheER}. We vary the mixing level and evaluate an 11-layer GCN, ensuring sufficient message passing. For better readability, we only test DropEdge, Dropout and DropMessage -- the three more popular methods used for training deep GNNs.

The results are presented in \autoref{fig:synthetic-zinc}, where we can observe that the mean absolute error (MAE) increases with the commute time percentile used to select the node pairs, as was hypothesized and evidenced in \citet{giovanni2024how}. Additionally, we observe that both train and test performance decline when using dropout with a probability as low as 0.2, and even more so with a higher probability of 0.5. These results provide strong evidence for the detrimental effects of dropout methods in modelling long-range interactions, supporting our theoretical analysis.

\subsection{Real-world Datasets}
\label{sec:real-world}

\newcommand{\positive}{MediumSeaGreen}
\newcommand{\negative}{Coral}

\begin{table}[t]
\caption{Difference in mean test accuracy (\%) between the best performing configuration of each dropout method and the baseline NoDrop model. Cell colors represent p-values from a t-test evaluating whether dropout improves performance: green indicates significance at 90\% confidence, while red denotes insignificant results.}

\begin{subtable}{\linewidth}
\caption{Node classification tasks.}
\label{tab:node-results}
    \centering
    \renewcommand{\arraystretch}{1.2}
    \resizebox{0.95\linewidth}{!}{%
    \begin{tabular}{
    cc
    >{\centering\arraybackslash}p{1.6cm}
    >{\centering\arraybackslash}p{1.6cm}
    >{\centering\arraybackslash}p{1.6cm}
    >{\centering\arraybackslash}p{1.6cm}
    >{\centering\arraybackslash}p{1.6cm}
    >{\centering\arraybackslash}p{1.6cm}
    }
    \hline
    \multirow{2}{*}{\textbf{GNN}} & \multirow{2}{*}{\textbf{Dropout}} & \multicolumn{3}{c}{\textbf{Homophilic Networks}} & \multicolumn{3}{c}{\textbf{Heterophilic Networks}} \\
    \cline{3-8}
    & & \textbf{Cora} & \textbf{CiteSeer} & \textbf{PubMed} & \textbf{Chameleon} & \textbf{Squirrel} & \textbf{TwitchDE} \\ \hline
\multirow{6}{*}{GCN} & DropEdge & \cellcolor{\positive!99.998} $+0.419$ & \cellcolor{\positive!99.977} $+0.686$ & \cellcolor{\positive!100.000} $+0.385$ & \cellcolor{\negative!77.613} $-0.634$ & \cellcolor{\negative!43.075} $+0.009$ & \cellcolor{\negative!65.578} $-0.093$ \\ \hhline{~-------}
 & DropNode & \cellcolor{\positive!99.884} $+0.373$ & \cellcolor{\positive!9.610} $+0.197$ & \cellcolor{\positive!100.000} $+0.841$ & \cellcolor{\negative!78.746} $-0.674$ & \cellcolor{\negative!97.065} $-0.656$ & \cellcolor{\negative!68.283} $-0.113$ \\ \hhline{~-------}
 & DropAgg & \cellcolor{\positive!83.844} $+0.224$ & \cellcolor{\positive!98.964} $+0.486$ & \cellcolor{\negative!99.883} $-0.245$ & \cellcolor{\negative!100.000} $-10.640$ & \cellcolor{\negative!100.000} $-13.970$ & \cellcolor{\negative!100.000} $-6.674$ \\ \hhline{~-------}
 & DropGNN & \cellcolor{\positive!99.992} $+0.396$ & \cellcolor{\positive!99.995} $+0.820$ & \cellcolor{\positive!100.000} $+0.506$ & \cellcolor{\negative!99.012} $-1.774$ & \cellcolor{\negative!79.597} $-0.291$ & \cellcolor{\negative!98.090} $-0.395$ \\ \hhline{~-------}
 & Dropout & \cellcolor{\positive!99.998} $+0.628$ & \cellcolor{\negative!13.167} $+0.128$ & \cellcolor{\positive!100.000} $+1.218$ & \cellcolor{\negative!96.925} $-1.395$ & \cellcolor{\negative!58.888} $-0.110$ & \cellcolor{\negative!91.041} $-0.260$ \\ \hhline{~-------}
 & DropMessage & \cellcolor{\negative!31.238} $+0.030$ & \cellcolor{\negative!97.302} $-0.388$ & \cellcolor{\positive!100.000} $+1.217$ & \cellcolor{\positive!61.927} $+1.322$ & \cellcolor{\negative!7.914} $+0.320$ & \cellcolor{\negative!11.556} $+0.160$ \\ \hline
\multirow{6}{*}{GIN} & DropEdge & \cellcolor{\negative!78.790} $-0.276$ & \cellcolor{\negative!88.637} $-0.529$ & \cellcolor{\negative!53.900} $-0.034$ & \cellcolor{\negative!84.826} $-0.726$ & \cellcolor{\negative!9.409} $+0.289$ & \cellcolor{\negative!61.538} $-0.124$ \\ \hhline{~-------}
 & DropNode & \cellcolor{\negative!5.749} $+0.379$ & \cellcolor{\positive!94.511} $+0.926$ & \cellcolor{\positive!56.367} $+0.296$ & \cellcolor{\negative!99.711} $-1.832$ & \cellcolor{\negative!83.029} $-0.265$ & \cellcolor{\negative!68.937} $-0.171$ \\ \hhline{~-------}
 & DropAgg & \cellcolor{\negative!29.605} $+0.095$ & \cellcolor{\negative!70.296} $-0.254$ & \cellcolor{\negative!99.606} $-0.487$ & \cellcolor{\negative!94.409} $-1.153$ & \cellcolor{\negative!6.618} $+0.275$ & \cellcolor{\negative!15.685} $+0.205$ \\ \hhline{~-------}
 & DropGNN & \cellcolor{\negative!83.899} $-0.478$ & \cellcolor{\negative!99.997} $-2.284$ & \cellcolor{\negative!100.000} $-1.366$ & \cellcolor{\negative!99.354} $-1.642$ & \cellcolor{\negative!55.837} $-0.066$ & \cellcolor{\negative!45.548} $-0.008$ \\ \hhline{~-------}
 & Dropout & \cellcolor{\positive!99.999} $+1.347$ & \cellcolor{\negative!43.002} $+0.011$ & \cellcolor{\positive!82.269} $+0.368$ & \cellcolor{\negative!99.998} $-2.702$ & \cellcolor{\negative!67.853} $-0.152$ & \cellcolor{\negative!97.056} $-0.573$ \\ \hhline{~-------}
 & DropMessage & \cellcolor{\positive!100.000} $+2.602$ & \cellcolor{\negative!17.172} $+0.219$ & \cellcolor{\positive!100.000} $+1.030$ & \cellcolor{\positive!26.743} $+0.943$ & \cellcolor{\negative!26.178} $+0.108$ & \cellcolor{\negative!48.081} $-0.025$ \\ \hline
    \end{tabular}}
\end{subtable}

\vspace{3mm}

\begin{subtable}{\linewidth}
\caption{Graph classification tasks.}
\label{tab:graph-results}
    \centering
    \renewcommand{\arraystretch}{1.2}
    \resizebox{0.95\linewidth}{!}{%
    \begin{tabular}{
    cc
    >{\centering\arraybackslash}p{1.6cm}
    >{\centering\arraybackslash}p{1.6cm}
    >{\centering\arraybackslash}p{1.6cm}
    >{\centering\arraybackslash}p{1.6cm}
    >{\centering\arraybackslash}p{1.6cm}
    >{\centering\arraybackslash}p{1.6cm}
    }
    \hline
    \multirow{2}{*}{\textbf{GNN}} & \multirow{2}{*}{\textbf{Dropout}} & \multicolumn{3}{c}{\textbf{Molecular Networks}} & \multicolumn{3}{c}{\textbf{Social Networks}} \\ \cline{3-8}
    & & \textbf{Mutag} & \textbf{Proteins} & \textbf{Enzymes} & \textbf{Reddit} & \textbf{IMDb} & \textbf{Collab} \\ \hline
\multirow{6}{*}{GCN} & DropEdge & \cellcolor{\negative!63.214} $-1.100$ & \cellcolor{\positive!54.519} $+1.750$ & \cellcolor{\negative!60.125} $-0.589$ & \cellcolor{\negative!100.000} $-8.380$ & \cellcolor{\negative!1.013} $+1.300$ & \cellcolor{\negative!81.750} $-1.145$ \\ \hhline{~-------}
 & DropNode & \cellcolor{\negative!98.534} $-5.100$ & \cellcolor{\positive!87.156} $+2.339$ & \cellcolor{\negative!90.337} $-2.292$ & \cellcolor{\negative!100.000} $-7.440$ & \cellcolor{\positive!94.347} $+2.720$ & \cellcolor{\negative!99.997} $-4.054$ \\ \hhline{~-------}
 & DropAgg & \cellcolor{\negative!59.452} $-0.900$ & \cellcolor{\positive!3.520} $+1.214$ & \cellcolor{\negative!56.598} $-0.553$ & \cellcolor{\negative!100.000} $-16.460$ & \cellcolor{\positive!99.168} $+3.580$ & \cellcolor{\negative!100.000} $-30.386$ \\ \hhline{~-------}
 & DropGNN & \cellcolor{\negative!48.214} $-0.200$ & \cellcolor{\positive!37.290} $+1.589$ & \cellcolor{\negative!95.881} $-2.918$ & \cellcolor{\negative!100.000} $-11.860$ & \cellcolor{\positive!19.809} $+1.540$ & \cellcolor{\negative!15.739} $+0.705$ \\ \hhline{~-------}
 & Dropout & \cellcolor{\negative!50.117} $-0.300$ & \cellcolor{\positive!75.955} $+2.018$ & \cellcolor{\negative!99.986} $-6.208$ & \cellcolor{\negative!100.000} $-6.050$ & \cellcolor{\negative!6.508} $+1.240$ & \cellcolor{\negative!95.375} $-1.729$ \\ \hhline{~-------}
 & DropMessage & \cellcolor{\negative!10.113} $+2.000$ & \cellcolor{\positive!90.894} $+2.143$ & \cellcolor{\negative!99.587} $-4.906$ & \cellcolor{\negative!100.000} $-7.360$ & \cellcolor{\negative!1.693} $+1.300$ & \cellcolor{\negative!58.564} $-0.330$ \\ \hline
\multirow{6}{*}{GIN} & DropEdge & \cellcolor{\negative!67.175} $-1.100$ & \cellcolor{\negative!94.505} $-1.804$ & \cellcolor{\negative!99.944} $-4.716$ & \cellcolor{\negative!99.516} $-2.700$ & \cellcolor{\negative!96.141} $-1.820$ & \cellcolor{\negative!84.982} $-0.642$ \\ \hhline{~-------}
 & DropNode & \cellcolor{\negative!95.535} $-3.800$ & \cellcolor{\negative!99.576} $-2.911$ & \cellcolor{\negative!59.621} $-0.493$ & \cellcolor{\negative!24.034} $+0.550$ & \cellcolor{\negative!49.874} $-0.120$ & \cellcolor{\positive!75.691} $+1.206$ \\ \hhline{~-------}
 & DropAgg & \cellcolor{\negative!94.440} $-3.800$ & \cellcolor{\negative!95.592} $-1.982$ & \cellcolor{\negative!62.499} $-0.640$ & \cellcolor{\negative!11.836} $+0.930$ & \cellcolor{\negative!94.316} $-1.640$ & \cellcolor{\negative!99.999} $-5.943$ \\ \hhline{~-------}
 & DropGNN & \cellcolor{\negative!99.905} $-7.000$ & \cellcolor{\negative!99.160} $-2.750$ & \cellcolor{\negative!99.620} $-3.694$ & \cellcolor{\negative!1.676} $+1.210$ & \cellcolor{\negative!99.999} $-5.200$ & \cellcolor{\negative!100.000} $-6.593$ \\ \hhline{~-------}
 & Dropout & \cellcolor{\negative!87.280} $-2.400$ & \cellcolor{\negative!99.928} $-3.446$ & \cellcolor{\negative!91.813} $-2.047$ & \cellcolor{\positive!93.552} $+2.590$ & \cellcolor{\negative!82.950} $-1.180$ & \cellcolor{\negative!59.832} $-0.180$ \\ \hhline{~-------}
 & DropMessage & \cellcolor{\negative!95.104} $-3.600$ & \cellcolor{\negative!84.634} $-1.125$ & \cellcolor{\negative!54.939} $-0.302$ & \cellcolor{\negative!11.556} $+0.850$ & \cellcolor{\negative!63.798} $-0.460$ & \cellcolor{\positive!75.540} $+1.126$ \\ \hline
    \end{tabular}}
\end{subtable}

\end{table}

To test the dropping methods on real-world datasets, we use the \inline{GCN}, GIN \cite{xu2019powerfulgraphneuralnetworks} and \inline{GAT} \cite{veličković2018gat} architectures -- \inline{GCN} and GIN satisfy the model assumptions made in all the theoretical results presented in \autoref{sec:theory}, while \inline{GAT} does not satisfy any of them, since the attention scores are computed as a function of \emph{all} the node representations. Therefore, \inline{GCN}, GIN and \inline{GAT} together provide a broad representation of different \inline{MPNN} architectures. We present the results for GCN and GIN in the main text, since these models were used as baselines in a majority of works on alleviating over-squashing \cite{black2023resistance,karhadkar2023fosr,topping2022understanding,gutteridge2023drew,alon2021on,rodriguez2022diffwire,qian2024prmpnn}; the results for GAT are reported in \autoref{tab:gat-results}.

For each dataset$-$model$-$dropout combination, we perform 20 independent runs to find the best performing dropout configuration; results are reported in \autoref{tab:best-prob}. We then perform a t-test to assess whether dropout improves performance, using 50 samples from the NoDrop model ($q=0$) and 50 samples from the best performing dropout configuration.\footnote{The t-test assumes that both samples are drawn from normal distributions -- all Shapiro-Wilk tests for non-normality of samples \cite{Shapiro1965} failed at 90\% confidence.} In this section, we report the p-values of the tests, and in \autoref{tab:hedges-g}, we report the effect sizes as Hedges' $g$ statistic \cite{Hedges1981}.

\textbf{Node-classification.} Although determining whether a task requires modelling LRIs can be challenging, understanding the structure of the datasets can provide important insight. For example, homophilic datasets have local consistency in node labels, \ie nodes closely connected to each other have similar labels. On the other hand, in heterophilic datasets, nearby nodes often have dissimilar labels. Since DropEdge-variants increase the sensitivity of a node's representations to its immediate neighbors, and reduce its sensitivity to distant nodes, we expect it to improve performance on homophilic datasets but harm performance on heterophilic ones; such a setup was also used in \citet{topping2022understanding}. In this work, we use Cora \citep{mccallum2000cora}, CiteSeer \citep{giles1998citeseer} 
and PubMed \citep{namata2012pubmed} as representatives of homophilic datasets \citep{zhu2020heterophily,lim2021new}, and Squirrel, Chameleon and TwitchDE \citep{musae} to represent heterophilic datasets \citep{lim2021new}. The networks' statistics are presented in \autoref{tab:node-datasets}, where we can note the remarkably lower homophily measures of heterophilic datasets.

The results are presented in \autoref{tab:node-results}. It is clear to see that dropout \emph{significantly improves} test performance on homophilic datasets -- with $40/54 \approx 74\%$ cases performing better than the corresponding NoDrop baseline -- indicating that these methods are indeed beneficial in tackling short-range tasks. On the other hand, with the heterophilic datasets, the improvement is \emph{insignificant}. Rather, in most ($45/54 \approx 83\%$) cases, the best dropout configuration performs worse than the NoDrop baseline. This suggests that the dropping methods harm generalization in long-range tasks by forcing models to overfit to short-range signals (see \autoref{sec:over-fitting} for supporting evidence).

\textbf{Graph-classification.} Several graph classification datasets have also been identified as long-range tasks, like the molecular networks datasets Mutag \cite{debnath1991mutag}, Proteins \cite{dobson2003proteins} and Enzymes \cite{borgwardt2005enzymes}, and the social networks datasets Reddit, IMDb and Collab \cite{yanardag2015dgk}. These datasets have also been used for evaluation in previous works on over-squashing, including \citet{karhadkar2023fosr,black2023resistance}.

The results are shown in \autoref{tab:graph-results}, where we observe that dropout methods generally have insignificant effects on model performance, and often even a non-positive effect ($67/108 \approx 62\%$ cases). Notably, the p-values 
are lower as compared to those recorded for heterophilic datasets in \autoref{tab:node-results}, \ie higher evidence for efficacy of dropping methods. We conjecture that over-squashing may have limited impact on model performance in graph-level tasks since the aggregation module eventually mixes information from distant nodes for computing graph-level representations.

\subsection{Evaluating DropSens}
\label{sec:drop-sens-gcn}

\begin{figure}
    \centering
    \includegraphics[width=0.8\linewidth]{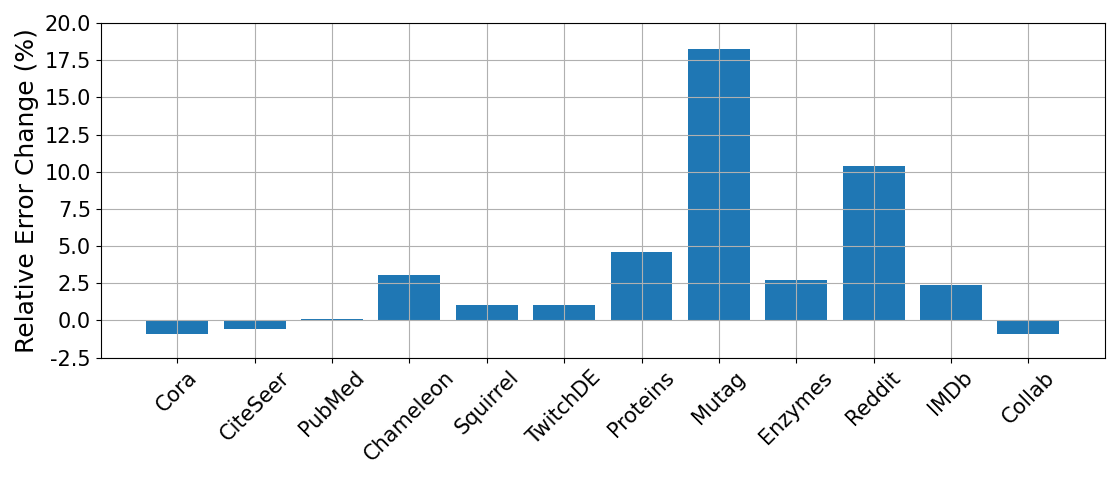}
    \caption{Relative change in test-time performance of a GCN using DropSens, compared to the baseline DropEdge, on real-world datasets from \autoref{sec:real-world}.}
    \label{fig:error-diff}
\end{figure}

We start by comparing DropSens with DropEdge on real-world datasets from \autoref{sec:real-world}, illustrating how algorithms can be readily adapted for better suitability at modelling LRIs. In \autoref{fig:error-diff}, we present the relative change in error rate ($1 - \text{Acc}$) of DropSens \wrt DropEdge, with GCN as the base model. It is clear to observe a uniform improvement in the performance on long-range tasks, suggesting that addressing over-squashing using DropSens can enhance the effectiveness of GCNs.

\begin{table}[t]
\caption{Performance of GCN with graph rewiring methods. \first{First}, \second{second}, and \third{third} best results are coloured.}

\begin{subtable}[t]{\linewidth}
\caption{Node-classification tasks -- results for other methods taken from \cite[Table 2]{topping2022understanding}.}
\label{tab:node-drop-sens}
    \centering
    \renewcommand{\arraystretch}{1.2}
    \resizebox{0.95\linewidth}{!}{%
    \begin{tabular}{
    c
    >{\centering\arraybackslash}p{1.6cm}
    >{\centering\arraybackslash}p{1.6cm}
    >{\centering\arraybackslash}p{1.6cm}
    >{\centering\arraybackslash}p{1.6cm}
    >{\centering\arraybackslash}p{1.6cm}
    >{\centering\arraybackslash}p{1.6cm}
    }
    \hline
    \textbf{Rewiring} & \textbf{Cora} & \textbf{CiteSeer} & \textbf{PubMed} & \textbf{Chameleon} & \textbf{Squirrel} & \textbf{Actor} \\ \hline
None & $81.89$ & $72.31$ & $78.16$ & $41.33$ & $30.32$ & $23.84$ \\ \hline
Undirected & - & - & - & $42.02$ & $35.53$ & $21.45$ \\ \hline
+FA & $81.65$ & $70.47$ & $\second{79.48}$ & $42.67$ & $36.86$ & $24.14$ \\ \hline
DIGL (PPR) & $\second{83.21}$ & $\first{73.29}$ & $78.84$ & $42.02$ & $33.22$ & $24.77$ \\ \hline
DIGL + Undirected & - & - & - & $42.68$ & $32.48$ & $\third{25.45}$ \\ \hline
SDRF & $\third{82.76}$ & $\third{72.58}$ & $\third{79.10}$ & $\third{42.73}$ & $\third{37.05}$ & $\first{28.42}$ \\ \hline
SDRF + Undirected & - & - & - & $\second{44.46}$ & $\second{37.67}$ & $\second{28.35}$ \\ \hhline{=======}
DropSens & $\first{84.98}$ & $\first{73.35}$ & $\first{84.30}$ & $\first{53.01}$ & $\first{41.32}$ & $22.38$ \\ \hline
    \end{tabular}}
\end{subtable}

\vspace{3mm}

\begin{subtable}{\linewidth}
\caption{Graph-classification tasks -- results for other methods from \cite[Table 1]{black2023resistance}.}
\label{tab:graph-drop-sens-gcn}
    \centering
    \renewcommand{\arraystretch}{1.2}
    \resizebox{0.95\linewidth}{!}{%
    \begin{tabular}{
    c
    >{\centering\arraybackslash}p{1.6cm}
    >{\centering\arraybackslash}p{1.6cm}
    >{\centering\arraybackslash}p{1.6cm}
    >{\centering\arraybackslash}p{1.6cm}
    >{\centering\arraybackslash}p{1.6cm}
    >{\centering\arraybackslash}p{1.6cm}
    >{\centering\arraybackslash}p{1.6cm}
    }
    \hline
    \textbf{Rewiring} & \textbf{Mutag} & \textbf{Proteins} & \textbf{Enzymes} & \textbf{Reddit} & \textbf{IMDb} & \textbf{Collab} \\ \hline
None & $72.15$ & $70.98$ & $27.67$ & $68.26$ & $\third{49.77}$ & $33.78$ \\ \hline
Last FA & $70.05$ & $\third{71.02}$ & $26.47$ & $68.49$ & $48.98$ & $33.32$ \\ \hline
Every FA & $70.45$ & $60.04$ & $18.33$ & $48.49$ & $48.17$ & $\third{51.80}$ \\ \hline
DIGL & $\second{79.70}$ & $70.76$ & $\second{35.72}$ & $\second{76.04}$ & $\first{64.39}$ & $\second{54.50}$ \\ \hline
SDRF & $71.05$ & $70.92$ & $\third{28.37}$ & $68.62$ & $49.40$ & $33.45$ \\ \hline
FoSR & $\first{80.00}$ & $\first{73.42}$ & $25.07$ & $\third{70.33}$ & $49.66$ & $33.84$ \\ \hline
GTR & $\third{79.10}$ & $\second{72.59}$ & $27.52$ & $68.99$ & $\second{49.92}$ & $33.05$ \\ \hhline{=======}
DropSens & $70.20$ & $70.61$ & $\first{36.01}$ & $\first{77.44}$ & $49.32$ & $\first{61.68}$ \\ \hline
    \end{tabular}}
\end{subtable}

\end{table}

We now benchmark DropSens against state-of-the-art graph-rewiring techniques designed specifically to tackle over-squashing (see \autoref{sec:over-squashing} for their descriptions). We train a GCN on node classification tasks, following the setup in \cite{topping2022understanding}, and both GCN and GIN on graph classification tasks, following \citet{karhadkar2023fosr,black2023resistance}. The results with GCN are reported in \autoref{tab:node-drop-sens} and \autoref{tab:graph-drop-sens-gcn}, where we find that DropSens outperforms other methods in node classification tasks, and performs competitively in graph classification tasks. In addition to superior performance, another advantage of DropSens over the other methods is that it significantly reduces the number of messages being propagated, thereby tackling the problem of over-smoothing and increasing training speed.

The results with GIN are presented in \autoref{sec:gin-drop-sens}, where we observe that DropSens does not perform competitively -- unsurprising, since DropSens was specifically designed to work with GCN's message-passing scheme.

\section{Conclusion}
\label{sec:conclusion}

There exists an important gap in our understanding of several algorithms designed for training deep GNNs -- while their positive effects on model performance have been well-studied, making them popular choices for training deep \inline{GNNs}, their evaluation has been limited to short-range tasks. 
This is rooted in a key assumption: that if a deep GNN is trainable, it must also be capable of modelling LRIs. 
As a result, potential adverse effects of these algorithms on capturing LRIs have been overlooked.
Our results challenge this assumption -- we theoretically and empirically show that DropEdge- and Dropout-variants exacerbate the over-squashing problem in deep GNNs, and degrade performance on long-range tasks.
This highlights the need for a more comprehensive evaluation of common training practices for deep GNNs, with special emphasis on their capacity to capture LRIs. This is crucial for building confidence in their use beyond controlled benchmarks. 

\textbf{Limitations.} While our theoretical analysis successfully predicts how DropEdge-variants affect test performance on short-range and long-range tasks, it is based on several simplifying assumptions on the message-passing scheme. 
These assumptions, although standard in the literature, limit the generalizability of our conclusions to other architectures, including ResGCNs \cite{li_2016_graph-seq-nns}, GATs \cite{veličković2018gat}, and Graph Transformers \cite{Wu2021GraphTrans}.
Additionally, an important limitation of DropSens is that it requires an architecture-specific alteration to the edge-dropping strategy, which is not practical in general. As also mentioned in \autoref{sec:drop-sens}, we did not intend to introduce DropSens as a benchmark, but rather to demonstrate how methods designed for alleviating over-smoothing can be readily adapted to simultaneously control over-squashing.


\textbf{Future Directions.} 
Currently, real-world datasets are classified as short- or long-range tasks based on extensive model training \cite{alon2021on} or weak proxy measures like node homophily \cite{topping2022understanding}. Developing a reliable measure of information mixing in the ground-truth data could greatly benefit the research community. Such a measure would enable more precise identification of short-, intermediate- and long-range tasks, improving evaluation and benchmarking. Another interesting direction is to investigate the significance of over-squashing in graph-level tasks, where the aggregation module of MPNNs enables some mixing of information from distant nodes. To the best of our knowledge, \citet{giovanni2024how} is the only work that directly addresses this question, offering strong theoretical insights. However, empirical validation of these effects remains limited.


\bibliography{sections/bibliography}
\bibliographystyle{iclr2025_conference}

\clearpage \appendix
\addcontentsline{toc}{section}{Appendix}
\part{Appendix}
\parttoc
\begin{appendix}
    \section{Related Works}
\label{sec:related}

\subsection{Methods for Alleviating Over-smoothing}
\label{sec:over-smoothing}

A popular choice for reducing over-smoothing in GNNs is to regularize the model. Recall that DropEdge \citep{rong2020dropedge} implicitly regularizes the model by adding noise to it (\autoref{sec:drop-edge}). A similarly regularization effect is observed with the methods discussed in the main text -- DropNode \cite{feng2020dropnode}, DropAgg \cite{jiang2023dropagg}, DropGNN \cite{papp2021dropgnn}, Dropout \cite{srivastava2014dropout} and DropMessage \cite{Fang2022DropMessageUR}. 
Graph Drop Connect (GDC) \citep{hasanzadeh2020bayesian} combines DropEdge and DropMessage together, resulting in a layer-wise sampling scheme that uses a different subgraph for message-aggregation over each feature dimension. These methods successfully addressed the over-smoothing problem, enabling the training of deep GNNs, and performed competitively on several benchmarking datasets.

Another powerful form of implicit regularization is feature normalization, which has proven crucial in enhancing the performance and stability of several types of neural networks \citep{huang2020normalizationtechniquestrainingdnns}. Exploiting the inductive bias in graph-structured data, normalization techniques like PairNorm \citep{Zhao2020PairNorm}, Differentiable Group Normalization (DGN) \citep{zhou2020towards} and NodeNorm \citep{zhou2021understanding} have been proposed to reduce over-smoothing in GNNs. On the other hand, Energetic Graph Neural Networks (EGNNs) \cite{zhou2021dirichlet} explicitly regularize the optimization by constraining the layer-wise Dirichlet energy to a predefined range.

In a different vein, motivated by the success of residual networks (ResNets) \citep{he_2016_residual} in computer vision, \citet{li2019point-clouds} proposed the use of residual connections to prevent the smoothing of representations. Residual connections successfully improved the performance of GCN on a range of graph-learning tasks. \citet{chen20vsimpledeep} introduced GCN-II, which uses skip connections from the input to all hidden layers. This layer wise propagation rule has allowed for training of ultra-deep networks -- up to 64 layers. Some other architectures, like the Jumping Knowledge Network (JKNet) \cite{xu2018jknet} and the Deep Adaptive GNN (DAGNN) \cite{liu2020towards}, aggregate the representations from \textit{all} layers, $\{\representation\layer{\ell}_i\}_{\ell=1}^L$, before processing them through a readout layer.

\subsection{Homophily Bias in Evaluation of Techniques for Deep GNN}
\label{sec:eval-limit}

\begin{table}[t]

\caption{Statistics of node-classification datasets. 
Homophily measures as defined in \citet{lim2021new}.}
\label{tab:homophily-measures}
    
    \centering
    \renewcommand{\arraystretch}{1.3}
    \begin{tabular}{cccccc}
        \hline
        \textbf{Dataset} & \textbf{Nodes} & \textbf{Edges} & \textbf{Features} & \textbf{Classes} & \textbf{Homophily} \\ \hline
        \multicolumn{6}{c}{\textbf{Homophilic Networks}} \\ \hline
        Reddit & 232,965 & 114,615,892 & 602 & 41 & 0.653 \\ \hline
        OGBN-ArXiv & 169,343 & 1,166,243 & 128 & 40 & 0.416 \\ \hline
        Coauthor-CS & 18,333 & 163,788 & 6,805 & 15 & 0.755 \\ \hline
        Coauthor-Physics & 34,493 & 495,924 & 8,415 & 5 & 0.847 \\ \hline
        Wiki-CS & 11,701 & 216,123 & 300 & 10 & 0.568 \\ \hline
        Amazon-Computers & 13,752 & 491,722 & 767 & 10 & 0.700 \\ \hline
        Amazon-Photo & 7,650 & 238,162 & 745 & 8 & 0.772 \\ \hline
        \multicolumn{6}{c}{\textbf{Heterophilic Networks}} \\ \hline
        Flickr & 89,250 & 899,756 & 500 & 7 & 0.070 \\ \hline
        Cornell & 183 & 298 & 1,703 & 5 & 0.031 \\ \hline
        Texas & 183 & 325 & 1,703 & 5 & 0.001 \\ \hline
        Wisconsin & 251 & 515 & 1,703 & 5 & 0.094 \\ \hline
    \end{tabular}
    
\end{table}

We examine the evaluation protocols commonly used for assessing methods aimed at alleviating over-smoothing in deep GNNs -- many of which are also widely adopted for training deep architectures. Notably, we highlight a misalignment between the intended goal of these methods -- to improve the trainability of deep GNNs -- and their evaluation, which is often restricted to short-range tasks.

For example, DropEdge \cite{rong2020dropedge} was evaluated on Cora \cite{mccallum2000cora}, CiteSeer \cite{giles1998citeseer}, PubMed \cite{namata2012pubmed}, and a version of Reddit \cite{hamilton2017ppi} distinct from the one used in our experiments. The first three exhibit high label homophily (see \autoref{tab:node-datasets}) and are known to be better modelled by shallower networks \cite{zhou2020towards}. Reddit also displays strong homophily, 
as can be seen in \autoref{tab:homophily-measures}. Similarly, DropNode \cite{feng2020dropnode} was evaluated on Cora, CiteSeer, and PubMed; DropAgg \cite{jiang2023dropagg} on Cora ML, CiteSeer, and OGBN-ArXiv \cite{hu2020ogb}, which has moderate homophily; DropMessage \cite{rong2020dropedge} was evaluated on Cora, CiteSeer, PubMed, OGBN-ArXiv, and Flickr, with only the latter having low homophily; GDC \cite{hasanzadeh2020bayesian} was evaluated on Cora, Cora ML and CiteSeer.

A similar trend can be observed in the evaluation of feature normalization techniques used to regularize GNNs. PairNorm \cite{Zhao2020PairNorm} and DGN \cite{zhou2020towards} were evaluated on Cora, CiteSeer, PubMed, and Coauthor-CS \cite{shchur2018pitfalls}; NodeNorm \cite{zhou2021understanding} on Cora, CiteSeer, PubMed, Coauthor-CS, Wiki-CS \cite{mernyei2020wiki}, and Amazon-Photo \cite{shchur2018pitfalls}; and EGNNs \cite{zhou2021dirichlet} on Cora, PubMed, Coauthor-Physics \cite{shchur2018pitfalls}, and OGBN-ArXiv -- all of these datasets are highly homophilic.

A similar trend is observed in the evaluation of architectural modifications designed to enable deeper GNNs. GCN-II \cite{chen20vsimpledeep} on Cora, CiteSeer, PubMed, and Chameleon; JKNet \cite{xu2018jknet} on Cora, CiteSeer, and Reddit; and DAGNN \cite{liu2020towards} on Cora, CiteSeer, PubMed, Coauthor-CS, Coauthor-Physics, Amazon-Computers \cite{shchur2018pitfalls}, and Amazon-Photo -- many of these datasets are highly homophilic as well.

This pattern indicates that an overwhelming proportion of evaluations have been restricted to short-range, homophilic tasks. Such a narrow focus risks overstating the general effectiveness of these methods and masking their potential limitations in long-range scenarios.
 
A few exceptions stand out. DropGNN \cite{papp2021dropgnn}, which was evaluated on graph-classification from the TUDataset \cite{Morris2020TUD}, aligning more closely with evaluations of rewiring methods targeting over-squashing \cite{karhadkar2023fosr,black2023resistance}. NodeNorm, while primarily evaluated on homophilic datasets, was also tested on three heterophilic graphs: Cornell, Texas, and Wisconsin \cite{Pei2020Geom-GCN}. GCN-II saw broader evaluation, including on several long-range tasks such as Chameleon, Cornell, Texas, Wisconsin, and the Protein-Protein Interaction (PPI) networks \cite{hamilton2017ppi}. Lastly, JKNet was also evaluated on the PPI networks.

\subsection{Methods for Alleviating Over-squashing}
\label{sec:over-squashing}

In this section, we will review some of the graph rewiring methods proposed to address the problem of over-squashing. Particularly, we wish to emphasize a commonality among these methods -- edge addition is necessary. 
As a reminder, \textit{graph rewiring} refers to modifying the edge set of a graph by adding and/or removing edges in a systematic manner. In a special case, which includes many of the rewiring techniques we will discuss, the original topology is completely discarded, and only the rewired graph is used for message-passing.

\textit{Spatial} rewiring methods use the topological relationships between the nodes in order to come up with a rewiring strategy. That is the graph rewiring is guided by the objective of optimizing some chosen topological properties. For instance, \citet{alon2021on} introduced a fully-adjacent (FA) layer, wherein messages are passed between all nodes. GNNs using a FA layer in the final message-passing step were shown to outperform the baselines on a variety of long-range tasks, revealing the importance of information exchange between far-off nodes which standard message-passing cannot facilitate. \citet{topping2022understanding} proposed a curvature-based rewiring strategy, called the Stochastic Discrete Ricci Flow (SDRF), which aims to reduce the ``bottleneckedness'' of a graph by adding suitable edges, while simultaneously removing edges in an effort to preserve the statistical properties of the original topology. \citet{black2023resistance} proposed the Greedy Total Resistance (GTR) technique, which optimizes the graph's total resistance by greedily adding edges to achieve the greatest improvement. One concern with graph rewiring methods is that unmoderated densification of the graph, \eg using a fully connected graph for propagating messages, can result in a loss of the inductive bias the topology provides, potentially leading to over-fitting. Accordingly, \citet{gutteridge2023drew} propose a Dynamically Rewired (DRew) message-passing framework that gradually \textit{densifies} the graph. Specifically, in a given layer $\ell$, node $i$ aggregates messages from its entire $\ell$-hop receptive field instead of just the immediate neighbors. This results in an improved communication over long distances while also retaining the inductive bias of the shortest distance between nodes.

\textit{Spectral} methods, on the other hand, use the spectral properties of the matrices encoding the graph topology, \eg the adjacency or the Laplacian matrix, to design rewiring algorithms. For example, \citet{rodriguez2022diffwire} proposed a differentiable graph rewiring layer based on the Lov\a'asz bound \cite[Corollary 3.3]{lovasz1993random}. Similarly, \citet{banerjee2022expansion} introduced the Random Local Edge Flip (RLEF) algorithm, which draws inspiration from the ``Flip Markov Chain'' \cite{feder2006markovflip,mahlmann2005ksplitter} -- a sequence of such steps can convert a connected graph into an \textit{expander graph} -- a sparse graph with good connectivity (in terms of Cheeger's constant) --  with high probability \cite{allen2016expanders,giakkoupis2022expanders,cooper2019flipmarkov,feder2006markovflip,mahlmann2005ksplitter}, thereby enabling effective information propagation across the graph.

Some other rewiring techniques don't exactly classify as spatial or spectral methods. For instance, Probabilistically Rewired MPNN (PR-MPNN) \cite{qian2024prmpnn} learns to probabilistically rewire a graph, effectively mitigating under-reaching as well as over-squashing. Finally, \citet{brüelgabrielsson2023positional} proposed connecting all nodes at most $r$-hops away, for some $r\in\natural$, and introducing positional embeddings to allow for distance-aware aggregation of messages.

\subsection{Towards a Unified Treatment}
\label{sec:unified}

Several studies have shown that an inevitable trade-off exists between the problems of over-smoothing and over-squashing, meaning that optimizing for one will compromise the other. For instance, \citet{topping2022understanding,nguyen2023revisiting} showed that negatively curved edges create bottlenecks in the graph resulting in over-squashing of information. On the other hand, \citet[Proposition 4.3]{nguyen2023revisiting} showed that positively curved edges in a graph contribute towards the over-smoothing problem. To address this trade-off, they proposed Batch Ollivier-Ricci Flow (BORF), which adds new edges adjacent to the negatively curved ones, and simultaneously removes positively curved ones.
In a similar vein, \citet{giraldo2023trading} demonstrated that the minimum number of message-passing steps required to reach a given level of over-smoothing is inversely related to the Cheeger's constant, $h_{\graph}$. This again implies an inverse relationship between over-smoothing and over-squashing. 
To effectively alleviate the two issues together, they proposed the Stochastic Jost and Liu Curvature Rewiring (SJLR) algorithm, which adds edges that result in high improvement in the curvature of existing edges, while simultaneously removing those that have low curvature.

Despite the well-established trade-off between over-smoothing and over-squashing, some works have successfully tackled them together despite \textit{only} adding or removing edges. One such work is \citet{karhadkar2023fosr}, which proposed a rewiring algorithm that adds edges to the graph but does not remove any. The First-order Spectral Rewiring (FoSR) algorithm computes, as the name suggests, a first-order approximation to the spectral gap of the symmetric Laplacian matrix ($\laplacian^{\sym} = \identity_{N}-(\degree\pinv)^{1/2}\adjacency(\degree\pinv)^{1/2}$), and adds edges with the aim of maximizing it. Since the spectral gap directly relates to Cheeger's constant -- a measure of bottleneck-edness in the graph -- through Cheeger's inequality \citep{alon85,alon86,sinclair89}, this directly decreases the over-squashing levels. Moreover, \citet[Figure 5]{karhadkar2023fosr} empirically demonstrated that addition of (up to a small number of) edges selected by FoSR can lower the Dirichlet energy of the representations, suggesting the method's potential to simultaneously tackle over-smoothing. Taking a somewhat opposite approach, \citet{liu2023curvdrop} adapted DropEdge to \textit{remove} negatively curved edges sampled from a distribution proportional to edge curvatures. Their method, called CurvDrop, directly reduces over-squashing and, as a side benefit of operating on a sparser subgraph, also mitigates over-smoothing.
    \section{Proofs}
\label{sec:proofs}

\newtheorem*{lemma*}{Lemma}
\newtheorem*{theorem*}{Theorem}


\subsection{Expected Propagation Matrix under DropEdge-variants}
\label{sec:proofs-lemma}

\begin{lemma*}
    When using \inline{DropEdge}, the expected propagation matrix is given as:
    \begin{align*}
        \expectation_{\mathsf{DE}} \sb{\propagation\layer{1}_{ii}} &= \frac{1-q^{d_i+1}}{\rb{1-q}\rb{d_i+1}} \\
        \expectation_{\mathsf{DE}} \sb{\propagation\layer{1}_{ij}} &= \frac{1}{d_i}\rb{1 - \frac{1-q^{d_i+1}}{\rb{1-q}\rb{d_i+1}}}
    \end{align*}
    where $\rb{j\to i} \in \edges$; $\dot{\transition}_{ij} = 0$ otherwise.
\end{lemma*}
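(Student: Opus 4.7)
The plan is to compute $\expectation_{\mathsf{DE}}[\hat{\adjacency}_{ii}]$ directly and then extract $\expectation_{\mathsf{DE}}[\hat{\adjacency}_{ij}]$ for incoming edges via a symmetry/row-sum argument.

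First I would set up the randomness cleanly. Under DropEdge, each of the $d_i$ original incoming edges to node $i$ is independently retained with probability $1-q$, and a self-loop is then appended deterministically. Hence the post-drop in-degree entering the normalization is $1+K_i$, where $K_i\sim\Binomial(d_i,\,1-q)$. Since $\propagation = \propagation^{\asym} = \widetilde{\degree}^{-1}\widetilde{\adjacency}$ is computed on this augmented subgraph, we have
\begin{align*}
\hat{\adjacency}_{ii} = \frac{1}{1+K_i},\qquad
\hat{\adjacency}_{ij} = \frac{B_{ij}}{1+K_i}\ \text{for } (j\to i)\in\edges,
\end{align*}
where $B_{ij}\in\{0,1\}$ is the Bernoulli$(1-q)$ indicator that edge $(j\to i)$ survives, and $K_i=\sum_{k\in\hopnbr^{(1)}(i)} B_{ik}$.

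Next I would compute $\expectation[1/(1+K_i)]$ by expanding the binomial pmf and applying the identity $\frac{1}{k+1}\binom{d_i}{k} = \frac{1}{d_i+1}\binom{d_i+1}{k+1}$. This converts the sum into a shifted binomial expansion of $((1-q)+q)^{d_i+1}$ missing its $k=0$ term, after factoring out a $(1-q)^{-1}$. The resulting closed form is exactly
\begin{align*}
\expectation_{\mathsf{DE}}[\hat{\adjacency}_{ii}]
= \frac{1}{(d_i+1)(1-q)}\bigl(1 - q^{d_i+1}\bigr),
\end{align*}
which matches the claim. This is the main technical step; the only subtlety is handling the algebraic re-indexing carefully so that the remainder term $q^{d_i+1}$ appears with the correct sign.

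For the off-diagonal formula, rather than repeating a similar (but messier) conditional expectation computation, I would exploit two facts: (i) for every realization, the row sum of $\propagation$ equals $1$ deterministically, since row-normalization is applied after dropping; and (ii) the $d_i$ incoming edges of $i$ are exchangeable under DropEdge, so $\expectation[\hat{\adjacency}_{ij}]$ takes a common value for all $(j\to i)\in\edges$. Combining (i) and (ii) gives
\begin{align*}
\expectation_{\mathsf{DE}}[\hat{\adjacency}_{ii}] + d_i\cdot\expectation_{\mathsf{DE}}[\hat{\adjacency}_{ij}] = 1,
\end{align*}
and solving for $\expectation[\hat{\adjacency}_{ij}]$ using the diagonal formula immediately yields the stated expression. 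For $(j\to i)\notin\edges$ with $j\neq i$, $B_{ij}\equiv 0$ so the expectation vanishes trivially. The exchangeability-plus-row-sum shortcut is what keeps the proof short; without it, one would have to compute $\expectation[B_{ij}/(1+K_i)]$ by conditioning on $B_{ij}=1$, yielding a $\Binomial(d_i-1,1-q)$-weighted sum that requires a more awkward binomial identity to collapse.
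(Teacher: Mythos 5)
Your proposal is correct. For the diagonal entry you follow exactly the paper's route: writing $\hat{\adjacency}_{ii}=1/(1+K_i)$ with $K_i\sim\Binomial(d_i,1-q)$ and collapsing $\expectation[1/(1+K_i)]$ via the identity $\tfrac{1}{k+1}\binom{d_i}{k}=\tfrac{1}{d_i+1}\binom{d_i+1}{k+1}$ into a truncated binomial expansion of $((1-q)+q)^{d_i+1}$, which is precisely the computation in the paper's proof. Where you genuinely diverge is the off-diagonal entry: the paper grinds through the direct route you explicitly chose to avoid, i.e.\ it conditions on the edge $(j\to i)$ surviving, computes $\expectation[1/(2+\sum_{k=1}^{d_i-1}m_k)]$ with a second and noticeably more awkward binomial re-indexing, and only at the very last line arrives at the relation $\expectation_{\mathsf{DE}}[\hat{\adjacency}_{ij}]=\tfrac{1}{d_i}(1-\expectation_{\mathsf{DE}}[\hat{\adjacency}_{ii}])$. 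Your argument obtains that relation immediately from two structural observations: the row sums of the renormalized propagation matrix equal $1$ for every realization of the masks (the self-loop guarantees the denominator $1+K_i\geq 1$), and the $d_i$ incoming edges are exchangeable under i.i.d.\ masking, so all off-diagonal expectations in a row coincide. Both observations are valid, so your shortcut is sound and gives the same closed form. The trade-off is that the paper's longer computation serves as an independent verification of the consistency identity, whereas your derivation is shorter and makes the structural reason for that identity transparent; it would be worth stating explicitly in your write-up that row-stochasticity holds \emph{pointwise} (not merely in expectation) because normalization is performed after dropping, since that is the one place a reader might pause.
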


\begin{proof}
    Recall that under DropEdge, a self-loop is added to the graph \textit{after} the edges are dropped, and then the normalization is performed. In other words, the self-loop is never dropped. 
    Therefore, given the \iid masks, $m_1, \ldots, m_{d_i} \sim \Bernoulli\rb{1-q}$, on incoming edges to node $i$, the total number of messages is given by 
    \begin{align}
        1+\sum_{k=1}^{d_i} m_{k} = 1+M_i
    \end{align}
    
    where $M_i \sim \Binomial\rb{d_i, 1-q}$. Under asymmetric normalization (see \autoref{sec:gnns}), the expected weight of the message along the self-loop is computed as follows:
    \begin{align}
        \expectation_{\mathsf{DE}} \sb{\propagation\layer{1}_{ii}} 
        &= \expectation_{m_1,\ldots,m_{d_i}} \sb{\frac{1}{1+\sum_{k=1}^{d_i} m_{k}}} \\
        &= \expectation_{M_i}\sb{\frac{1}{1+M_i}} \\
        &= \sum_{k=0}^{d_i} \binom{d_i}{k} \rb{1-q}^k \rb{q}^{d_i-k} \rb{\frac{1}{1+k}} \\
        &= \frac{1}{\rb{1-q}\rb{d_i+1}} \sum_{k=0}^{d_i} \binom{d_i+1}{k+1} \rb{1-q}^{k+1} \rb{q}^{d_i-k} \\
        &= \frac{1}{\rb{1-q}\rb{d_i+1}} \sum_{k=1}^{d_i+1} \binom{d_i+1}{k} \rb{1-q}^{k} \rb{q}^{d_i+1-k} \\
        &= \frac{1-q^{d_i+1}}{\rb{1-q}\rb{d_i+1}}
    \end{align}
    
    Similarly, if the Bernoulli mask corresponding to $j\to i$ is $1$, then the total number of incoming messages to node $i$ is given by 
    \begin{align*}
        2+\sum_{k=1}^{d_i-1} m_k
    \end{align*}
    
    including one self-loop, which is never dropped, as noted earlier. On the other hand, the weight of the edge is simply $0$ if the corresponding Bernoulli mask is $0$. Using the Law of Total Expectation, the expected weight of the edge $j\to i$ can be computed as follows:
    \begin{align}
        \expectation_{\mathsf{DE}} \sb{\propagation\layer{1}_{ij}} 
        &= q\cdot 0 + \rb{1-q} \expectation_{m_1,\ldots,m_{d_i-1}} \sb{\frac{1}{2+\sum_{k=1}^{d_i-1} m_{k}}} \\
        &= \rb{1-q} \sum_{k=0}^{d_i-1} \binom{d_i-1}{k} \rb{1-q}^k \rb{q}^{d_i-1-k} \rb{\frac{1}{2+k}} \\
        &= \sum_{k=0}^{d_i-1} \frac{\factorial{d_i-1}}{\factorial{k+2}\factorial{d_i-1-k}} \rb{1-q}^{k+1} \rb{q}^{d_i-1-k} \rb{k+1} \\
        &= \sum_{k=2}^{d_i+1} \frac{\factorial{d_i-1}}{\factorial{k}\factorial{d_i+1-k}} \rb{1-q}^{k-1} \rb{q}^{d_i+1-k} \rb{k-1} \\
        &= \frac{1}{d_i\rb{d_i+1}\rb{1-q}} \sum_{k=2}^{d_i+1} \binom{d_i+1}{k} \rb{1-q}^{k} \rb{q}^{d_i+1-k} \rb{k-1} \\
        &= \frac{1}{d_i\rb{d_i+1}\rb{1-q}} \sb{\rb{d_i+1}\rb{1-q} - 1 + q^{d_i+1}} \\
        &= \frac{1}{d_i} \rb{1-\expectation_{\mathsf{DE}} \sb{\propagation\layer{1}_{ii}}} 
    \end{align}
\end{proof}

\textbf{Analysis of DropEdge-variants.} We will similarly derive the expected propagation matrix for other random edge-dropping algorithms. First off, DropNode \citep{feng2020dropnode} samples nodes and drops corresponding columns from the aggregation matrix directly, followed by rescaling of its entries: 
\begin{align}
\begin{split}
    \expectation_{\mathsf{DN}}\sb{\frac{1}{1-q}\propagation} = \frac{1}{1-q} \times \rb{1-q} \propagation = \propagation
\end{split} \label{eqn:dropnode}
\end{align}

That is, the expected propagation matrix is the same as in a \inline{NoDrop} model ($q=0$). 

Nodes sampled by DropAgg \cite{jiang2023dropagg} don't aggregate messages. Therefore, if $\propagation = \propagation^\asym$, then the expected propagation matrix is given by
\begin{align}
\begin{split}
    \expectation_{\mathsf{DA}}\sb{\hat{\adjacency}_{ii}} &= q + \frac{1-q}{d_i+1} = \frac{1+d_iq}{d_i+1} > \expectation_{\mathsf{DE}}\sb{\hat{\adjacency}_{ii}} \\
    \expectation_{\mathsf{DA}}\sb{\hat{\adjacency}_{ij}} &= \frac{1}{d_i}\rb{1-\expectation_{\mathsf{DA}}\sb{\hat{\adjacency}_{ii}}} < \expectation_{\mathsf{DE}}\sb{\hat{\adjacency}_{ij}}
\end{split}
\label{eqn:dropagg}
\end{align}

Finally, DropGNN \citep{papp2021dropgnn} samples nodes which neither propagate nor aggregate messages. From any node's perspective, if it is not sampled, then its aggregation weights are computed as for \inline{DropEdge}:
\begin{align}
\begin{split}
    \expectation_{\mathsf{DG}}\sb{\hat{\adjacency}_{ii}} &= q + \rb{1-q} \expectation_{\mathsf{DE}}\sb{\hat{\adjacency}_{ii}} = q + \frac{1-q^{d_i+1}}{d_i+1} > \expectation_{\mathsf{DA}}\sb{\hat{\adjacency}_{ii}} \\
    \expectation_{\mathsf{DG}}\sb{\hat{\adjacency}_{ij}} &= \frac{1}{d_i}\rb{1-\expectation_{\mathsf{DG}}\sb{\hat{\adjacency}_{ii}}} < \expectation_{\mathsf{DA}}\sb{\hat{\adjacency}_{ij}}
\end{split}
\label{eqn:dropgnn}
\end{align}

\subsection{Sensitivity in L-layer Linear GCNs}
\label{sec:proofs-thm}

\begin{theorem*}
    In an L-layer linear \inline{GCN} with $\propagation = \propagation^\asym$, using \inline{DropEdge}, DropAgg or DropGNN decreases the sensitivity of a node $i\in\vertices$ to another node $j\in \hopnbr^{\rb{L}}\rb{i}$, thereby reducing its effective receptive field. 
    \begin{align}
        \expectation_{\ldots} \sb{\rb{\propagation^L}_{ij}} = \sum_{\rb{u_0,\ldots,u_L}\in\paths\rb{j\to i}} \prod_{\ell=1}^L \expectation_{\ldots} \sb{\propagation_{u_{\ell}u_{\ell-1}}} < \expectation_{\mathsf{ND}} \sb{\rb{\propagation^L}_{ij}}
    \end{align}
    where $\mathsf{ND}$ refers to a NoDrop model ($q=0$), the placeholder $\cdots$ can be replaced with one of the edge-dropping methods $\mathsf{DE}$, $\mathsf{DA}$ or $\mathsf{DG}$, and the corresponding entries of $\expectation_{\ldots} [\propagation]$ can be plugged in from \autoref{eqn:dropedge}, \autoref{eqn:dropagg} and \autoref{eqn:dropgnn}, respectively. Moreover, the sensitivity monotonically decreases as the dropping probability is increased.
\end{theorem*}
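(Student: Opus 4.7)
The plan is to expand $\expectation_{\cdots}[(\propagation^L)_{ij}]$ via the standard path decomposition, and then compare it against the NoDrop baseline term-by-term. By the $\iid$ assumption on masks across layers, expectation distributes over the matrix product, giving
\begin{align*}
    \expectation_{\cdots}\sb{\rb{\propagation^L}_{ij}} = \sum_{(u_0,\ldots,u_L) \in \paths\rb{j\to i}} \prod_{\ell=1}^L \expectation_{\cdots}\sb{\propagation_{u_\ell u_{\ell-1}}},
\end{align*}
where $\paths\rb{j\to i}$ denotes the set of length-$L$ walks from $j$ to $i$, possibly traversing self-loops. This identity already appears in the theorem statement, so the real work lies in comparing the right-hand side across dropping schemes.

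The crux of the argument is a structural observation: since $j\in\hopnbr^{\rb{L}}\rb{i}$ satisfies $d_{\graph}(j,i) = L$, \emph{every} walk in $\paths\rb{j\to i}$ consists entirely of cross-edges; no self-loop step is used. This follows from a quick triangle-inequality budget count: if the $k$-th step were a self-loop $u_{k-1}=u_k$, then after the first $k-1$ genuine steps we would have $d_{\graph}(u_{k-1}, i) \geq L-(k-1)$, leaving only $L-k$ further steps to cover a distance of at least $L-k+1$, a contradiction. I expect this step to be the main obstacle -- not technically difficult, but the whole argument hinges on it being airtight.

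Granted this, every factor in every path product is a cross-edge expectation, which by \autoref{thm:sensitivity-1-layer} (or equivalently via the explicit formulas in \Autoref{eqn:dropedge,eqn:dropagg,eqn:dropgnn}) is strictly smaller than its NoDrop value $1/(d_{u_\ell}+1)$ whenever $q>0$. Since at least one length-$L$ walk from $j$ to $i$ exists -- that is exactly what $j\in\hopnbr^{\rb{L}}\rb{i}$ guarantees -- multiplying strict inequalities along each walk and summing over $\paths\rb{j\to i}$ yields the claimed strict inequality $\expectation_{\cdots}\sb{\rb{\propagation^L}_{ij}} < \expectation_{\mathsf{ND}}\sb{\rb{\propagation^L}_{ij}}$.

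For monotonicity in $q$, the same termwise reduction works: it suffices to show that each cross-edge expectation is monotonically decreasing in $q$ on $[0,1)$, since products and sums of nonnegative monotone quantities are monotone. For DropEdge, rewriting the expression so the $q$-dependent piece becomes the geometric sum $\frac{1}{d_u+1}\sum_{k=0}^{d_u} q^k$ makes the monotonicity transparent. For DropAgg, the cross-edge expectation simplifies to $(1-q)/(d_u+1)$, trivially decreasing. For DropGNN, differentiating the self-loop expectation yields $1 - q^{d_u} > 0$, so the self-loop weight is increasing and, by the identity $\expectation_{\mathsf{DG}}[\propagation_{u v}] = \frac{1}{d_u}\rb{1 - \expectation_{\mathsf{DG}}[\propagation_{uu}]}$, the cross-edge weight is decreasing. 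Termwise application along each walk concludes the monotonicity claim.
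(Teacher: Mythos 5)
Your proposal is correct and follows essentially the same route as the paper's proof: path decomposition of $(\expectation[\propagation]^L)_{ij}$, the observation that no length-$L$ walk between nodes exactly $L$ hops apart can use a self-loop, termwise comparison of each cross-edge factor against its NoDrop value $1/(d_{u_\ell}+1)$, and termwise monotonicity in $q$ via the geometric-series form of the self-loop weight. Your explicit budget-count justification for the no-self-loop claim and your remark that $\paths\rb{j\to i}$ is nonempty (needed for strictness) are details the paper leaves implicit, but the argument is the same.
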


\begin{proof}
    Recall that $\dot{\transition}$ can be viewed as the transition matrix of a non-uniform random walk, such that $\dot{\transition}_{uv} = \mathbb{P}\rb{u\to v}$. Intuitively, since there is no self-loop on any given L-length path connecting nodes $i$ and $j$ (which are assumed to be L-hops away), the probability of each transition on any path connecting these nodes is reduced. Therefore, so is the total probability of transitioning from $i$ to $j$ in exactly L hops.
    
    More formally, denote the set of paths connecting the two nodes by
    \begin{align}
        \paths\rb{j\to i} = \cb{\rb{u_0,\ldots,u_L}: u_0 = j; u_L = i; \rb{u_{\ell-1} \to u_{\ell}}\in\edges, \forall \ell\in\sb{L}}
    \end{align}
    
    The $\rb{i,j}$-entry in the propagation matrix is given by
    \begin{align}
        \rb{\dot{\transition}^L}_{ij} &= \sum_{\rb{u_0,\ldots,u_L}\in\paths\rb{j\to i}} \prod_{\ell=1}^L \dot{\transition}_{u_{\ell}u_{\ell-1}}
    \label{eqn:total-transition-prob}
    \end{align}
    
    Since there is no self-loop on any of these paths,
    \begin{align}
        \rb{\dot{\transition}^L}_{ij} &= \sum_{\rb{u_0,\ldots,u_L}\in\paths\rb{j\to i}} \prod_{\ell=1}^L \frac{1}{d_{u_{\ell}}}\rb{1-\frac{1-q^{d_{u_{\ell}}+1}}{\rb{1-q}\rb{d_{u_{\ell}}+1}}} \\
        &< \sum_{\rb{u_0,\ldots,u_L}\in\paths\rb{j\to i}} \prod_{\ell=1}^L \rb{\frac{1}{d_{u_{\ell}}+1}} \label{eqn:temp}
    \end{align}
    
    The right hand side of the inequality is the $\rb{i,j}$-entry in the L\textsuperscript{th} power of the propagation matrix of a \inline{NoDrop} model. From \autoref{eqn:dropagg} and \autoref{eqn:dropgnn}, we know that \autoref{eqn:temp} is true for DropAgg and DropGNN as well. We conclude the first part of the proof using \autoref{eqn:sensitivity-l-layer} -- the sensitivity of node $i$ to node $j$ is proportional to $(\dot{\transition}^L)_{ij}$. 
    
    Next, we recall the geometric series for any $q$:
    \begin{align}
        1+q+\ldots+q^d = \frac{1-q^{d+1}}{1-q}
    \end{align}
    Each of the terms on the right are increasing in $q$, hence, all the $\dot{\transition}_{u_{\ell}u_{\ell-1}}$ factors are decreasing in $q$. Similarly, $\expectation_{\mathsf{DA}}[\hat{\adjacency}_{ij}]$ and $\expectation_{\mathsf{DG}}[\hat{\adjacency}_{ij}]$ decrease with increasing $q$. Using these results with \autoref{eqn:total-transition-prob}, we conclude the second part of the theorem.
    
\end{proof}
    \section{Theoretical Extensions}
\label{sec:extensions}

\subsection{Sensitivity in Nonlinear MPNNs}
\label{sec:nonlinear}

While linear networks are useful in simplifying the theoretical analysis, they are often not practical. In this subsection, we will consider the upper bounds on sensitivity established in previous works, and extend them to the \inline{DropEdge} setting.

\textbf{ReLU GCNs.} \citet{xu2018jknet} considered the case of \inline{ReLU} nonlinearity, so that the update rule is $\representations\layer{\ell} = \relu(\propagation \representations\layer{\ell-1} \weights\layer{\ell})$. Additionally, it makes the simplifying assumption that each path in the computational graph is \textit{active} with a fixed probability, $\rho$ \cite[Assumption A1p-m]{kawaguchi2016minima}. Accordingly, the sensitivity (in expectation) between any two nodes is given as
\begin{align}
    \norm{\expectation_{\relu}\sb{\frac{\partial \representation\layer{L}_i}{\partial \feature_j}}}_1 = \sb{\rho \norm{\prod_{\ell=1}^L \weights\layer{\ell}}_1} \rb{\propagation^L}_{ij} 
    = \zeta_1\layer{L} \rb{\propagation^L}_{ij}
\end{align}
where $\zeta_1\layer{L}$ depends only on the depth $L$, and is independent of the choice of nodes $i,j\in\vertices$. Taking an expectation \wrt the random edge masks, we get
\begin{align}
    \expectation_{\mask\layer{1}, \ldots, \mask\layer{L}} \sb{\norm{\expectation_{\relu} \sb{\frac{\partial \representation\layer{L}_i}{\partial \feature_j}}}_1}
    &=
    \zeta_1^{\rb{L}}
    \rb{\expectation
    \sb{\prod_{\ell=1}^L \propagation\layer{\ell}}}_{ij}
    =
    \zeta_1\layer{L} \rb{\expectation\sb{\propagation}^{L}}_{ij}
    \label{eqn:xu}
\end{align}

Using \autoref{thm:sensitivity-l-layer-dec}, we conclude that in a ReLU-GCN, DropEdge, DropAgg and DropGNN will reduce the expected sensitivity between nodes L-hops away. Empirical observations in \Autoref{fig:linear-gcn_asymmetric,fig:linear-gcn_symmetric} suggest that we may expect an increase in sensitivity to neighboring nodes, but a significant decrease in sensitivity to those farther away.

\textbf{Source-only Message Functions.} \citet[Lemma 3.2]{black2023resistance} considers MPNNs with aggregation functions of the form
\begin{align}
    \aggregate\layer{\ell} \rb{\representation_i^{\rb{\ell-1}}, \cb{\representation_j^{\rb{\ell-1}}: j\in\hopnbr^{\rb{1}}\rb{i}}} =
    \sum_{j\in\recfld^{\rb{1}}\rb{i}} \propagation_{ij} \message\layer{\ell} \rb{\representation_j^{\rb{\ell-1}}} 
\end{align}
and $\update$ and $\message$ functions with bounded gradients. In this case, the sensitivity between two nodes $i,j\in\vertices$ 
can be bounded as
\begin{align}
    \norm{\frac{\partial \representation\layer{L}_i}{\partial \feature_j}}_1 \leq \zeta_2^{\rb{L}} \rb{\sum_{\ell=0}^L \propagation^{\ell}}_{ij}
\end{align}

As before, we can use the independence of edge masks to get an upper bound on the expected sensitivity:
\begin{align}
    \expectation_{\mask\layer{1}, \ldots, \mask\layer{L}} \sb{\norm{\frac{\partial \representation\layer{L}_i}{\partial \feature_j}}_1}
    &\leq
    \zeta_2^{\rb{L}} \rb{\expectation \sb{I_N + \sum_{\ell=1}^L \prod_{k=1}^{\ell} \propagation\layer{k}}}_{ij}
    =
    \zeta_2^{\rb{L}} \rb{\sum_{\ell=0}^L \expectation \sb{\propagation}^{\ell}}_{ij}
\end{align}

\autoref{fig:black_extension} shows the plot of the entries of $\sum_{\ell=0}^6 \dot{\transition}^{\ell}$ (\ie for DropEdge), as in the upper bound above, with $\propagation = \propagation^\asym$. 
We observe that the sensitivity between nearby nodes marginally increases, while that between distant nodes notably decreases (similar to \autoref{fig:linear-gcn_asymmetric}), suggesting significant over-squashing. Similar observations can be made with $\propagation = \propagation^\sym$, and for other DropEdge-variants.

\textbf{Source-and-Target Message Functions.} \citet[Lemma 1]{topping2022understanding} showed that 
if the aggregation function is instead given by 
\begin{align}
    \aggregate\layer{\ell} \rb{\representation_i^{\rb{\ell-1}}, \cb{\representation_j^{\rb{\ell-1}}: j\in\hopnbr^{\rb{1}}\rb{i}}} =
    \sum_{j\in\recfld^{\rb{1}}\rb{i}} \propagation_{ij} \message\layer{\ell} \rb{\representation_i^{\rb{\ell-1}}, \representation_j^{\rb{\ell-1}}}    
\end{align}
then the sensitivity between nodes $i\in\vertices$ and $j\in \hopnbr^{\rb{L}}\rb{i}$ can be bounded as
\begin{align}
    \norm{\frac{\partial \representation\layer{L}_i}{\partial \feature_j}}_1 \leq \zeta_3^{\rb{L}} \rb{\propagation^{L}}_{ij}
\end{align}

With random edge-dropping, this bound can be adapted as follows: 
\begin{align}
    \expectation_{\mask\layer{1}, \ldots, \mask\layer{L}} \sb{\norm{\frac{\partial \representation\layer{L}_i}{\partial \feature_j}}_1}
    \leq
    \zeta_3^{\rb{L}} \rb{\expectation\sb{\propagation}^{L}}_{ij}
\end{align}

which is similar to \autoref{eqn:xu}, only with a different proportionality constant, that is anyway independent of the choice of nodes. Here, again, we invoke \autoref{thm:sensitivity-l-layer-dec} to conclude that $(\expectation[\propagation]^L)_{ij}$ decreases monotonically with increasing \inline{DropEdge} probability $q$. This implies that, in a non-linear \inline{MPNN} with $\propagation = \propagation^\asym$, \inline{DropEdge} lowers the sensitivity bound given above. Empirical results in \autoref{fig:linear-gcn_symmetric} support the same conclusion for $\propagation = \propagation^\sym$.

\subsection{Test-time Monte-Carlo Dropout}
\label{sec:mc-average}

Up until now, we have focused on the expected sensitivity of the stochastic representations in models using \inline{DropEdge}-variants. This corresponds to their training-time behavior, wherein the activations are random. At test-time, the standard practice is to turn these methods off by setting $q=0$. However, this raises the over-smoothing levels back up \cite{xuanyuan2023shedding}. Another way of making predictions is to perform multiple stochastic forward passes, as during training, and then averaging the model outputs. This is similar to Monte-Carlo Dropout, which is an efficient way of ensemble averaging in MLPs \cite{gal2016mcd}, CNNs \cite{gal2016bayesianconvolutionalneuralnetworks} and RNNs \cite{gal2016rnn}. In addition to alleviating over-smoothing, this approach also outperforms the standard implementation in practical settings \cite{xuanyuan2023shedding}. We can study the effect of random edge-dropping in this setting by examining the sensitivity of the \textit{expected representations}:
\begin{align}
    \norm{\frac{\partial}{\partial \feature_j} \expectation \sb{\representation\layer{L}_i}}_1 = \norm{\expectation \sb{\frac{\partial \representation\layer{L}_i}{\partial \feature_j}}}_1
\end{align}

In linear models, the order of the two operations -- expectation and 1-norm -- is irrelevant:
\begin{align}
    \norm{\expectation \sb{\frac{\partial \representation\layer{L}_i}{\partial \feature_j}}}_1 
    = \norm{\expectation \sb{\rb{\propagation^L}_{ij}}\weights}_1
    = \expectation \sb{\rb{\propagation^L}_{ij} \norm{\weights}_1}
    = \expectation \sb{\norm{\frac{\partial \representation\layer{L}_i}{\partial \feature_j}}_1}
\end{align}

In general, the two quantities can be related using the convexity of norms and Jensen's inequality:
\begin{align}
    \norm{\frac{\partial}{\partial \feature_j} \expectation \sb{\representation\layer{L}_i}}_1
    \leq
    \expectation \sb{\norm{\frac{\partial \representation\layer{L}_i}{\partial \feature_j}}_1}
    \leq \ldots
\end{align}

Therefore, the upper bound results in \autoref{sec:nonlinear} trivially extend to the MC-averaged representations. Although tighter bounds may be derived for this setting, we leave that for future works.

    \section{Empirical Sensitivity Analysis}
\label{sec:emp-sens-setup}

In this section, we present some supplemental figures demonstrating the negative effects of random edge-dropping, particularly focusing on scenarios not covered by the theory. We also elaborate on the setup used for the empirical sensitivity analysis in \autoref{sec:theory}.

\subsection{Symmetrically Normalized Propagation Matrix}
\label{sec:fig-sym-norm}

\begin{figure}
    \centering
    \includegraphics[width=0.66\linewidth]{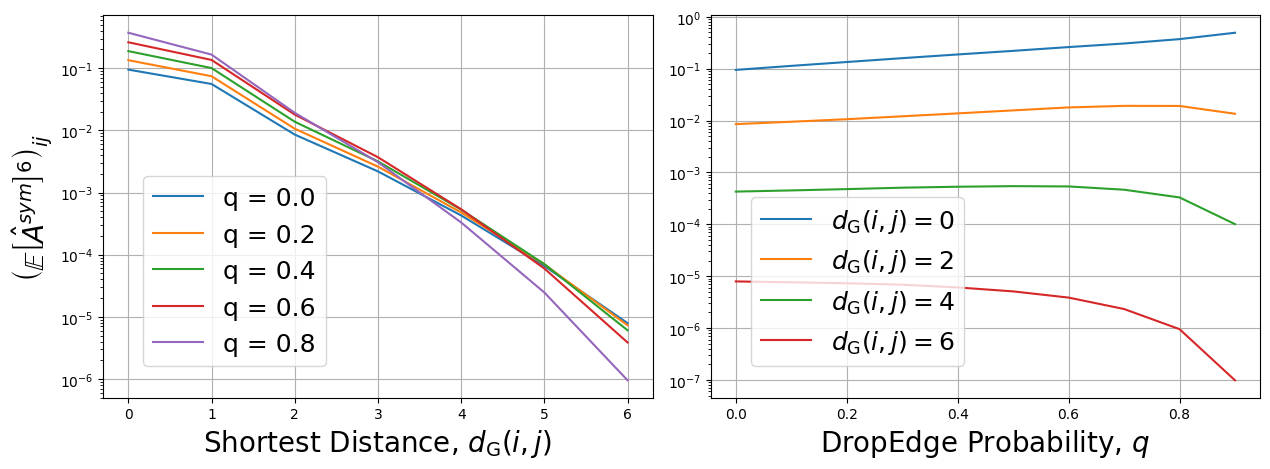}
    \caption{Entries of $\ddot{\transition}^6$, averaged after binning node-pairs by their shortest distance.}
    \label{fig:linear-gcn_symmetric}
\end{figure}

The results in \autoref{sec:theory} correspond to the use of $\hat{\adjacency} = \propagation^\asym$ for aggregating messages -- in each message passing step, only the in-degree of node $i$ is used to compute the aggregation weights of the incoming messages. In practice, however, it is more common to use the symmetrically normalized propagation matrix, $\propagation = \propagation^\sym$, which ensures that nodes with high out-degree do not dominate the information flow in the graph \cite{kipf2017gcn}. As in \autoref{eqn:sensitivity-l-layer}, we are looking for 
\begin{align}
    \ddot{\transition}^L \coloneqq \expectation_{\mask\layer{1}, \ldots, \mask\layer{L}} \sb{\prod_{\ell=1}^L \propagation\layer{\ell}}
\end{align}

where $\ddot{\transition} \coloneqq \expectation_{\mathsf{DE}} [\propagation^{\sym}]$. While $\ddot{\transition}$ is analytically intractable, we can approximate it using Monte-Carlo sampling. Accordingly, we use the Cora dataset, and sample 20 DropEdge masks to compute an approximation of $\ddot{\transition}$, and plot out the entries of $\ddot{\transition}^L$, as we did for $\dot{\transition}^L$ in \autoref{fig:linear-gcn_asymmetric}. The results are presented in \autoref{fig:linear-gcn_symmetric}, which shows that while the sensitivity between nodes up to 3 hops away is increased, that between nodes farther off is significantly reduced, same as in \autoref{fig:linear-gcn_asymmetric}.

\subsection{Upper Bound on Expected Sensitivity}

\begin{figure}
    \centering
    \includegraphics[width=0.66\linewidth]{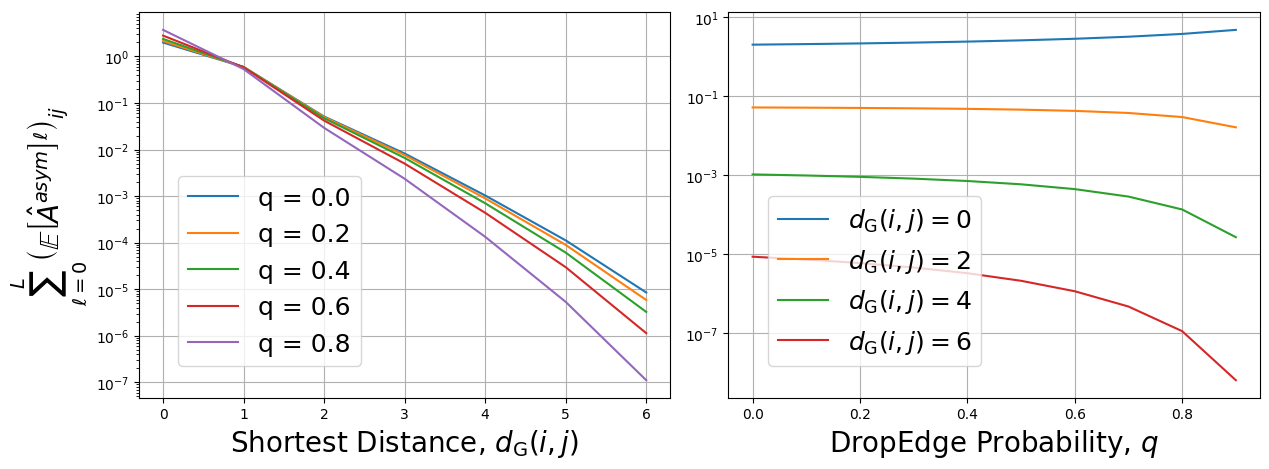}
    \caption{Entries of $\sum_{\ell=0}^6 \dot{\transition}^\ell$, averaged after binning node-pairs by their shortest distance.}
    \label{fig:black_extension}
\end{figure}

\citet{black2023resistance} showed that the sensitivity between any two nodes in a graph can be bounded using the sum of the powers of the propagation matrix. In \autoref{sec:nonlinear}, we extended this bound to random edge-dropping methods with independent edge masks sampled in each layer:

\begin{align*}
    \expectation_{\mask\layer{1}, \ldots, \mask\layer{L}} \sb{\norm{\frac{\partial \representation\layer{L}_i}{\partial \feature_j}}_1}
    \leq
    \zeta_3^{\rb{L}} \rb{\sum_{\ell=0}^L \expectation \sb{\propagation}^{\ell}}_{ij}
\end{align*}

Although this bound does not have a closed form, we can again use the Cora network to study its entries. We plot the entries of $\sum_{\ell=0}^6 \dot{\transition}^\ell$, corresponding to \inline{DropEdge}, against the shortest distance between node-pairs. The results are presented in \autoref{fig:black_extension}. We observe an exponential decrease in the sensitivity bound as the distance between nodes increases, suggesting that DropEdge is not suitable for capturing LRIs.

\subsection{MC-Approximation of Sensitivity in Nonlinear MPNNs}
\label{sec:sensitivity-setup}

Given a target node from the Cora dataset \citep{mccallum2000cora}, we computed the sensitivity of its representation to source nodes up to $L=6$ hops away in ReLU-GCNs of width 32. The raw sensitivities were normalized to obtain \textit{influence scores} \cite{xu2018jknet}. 
This was repeated for 25 target nodes, and 25 model$-$dropout samples were used for each of them. The source nodes were binned by the shortest distance from the corresponding target node, and the influence scores were averaged over each bin to obtain an \textit{average influence} from nodes $\ell$-hops away.

\textbf{Why influence scores?}
    \section{Experiments Details}
\label{sec:exp-setup}

In this section, we expand on the details of the experiments in \autoref{sec:exp}. All experiments were run on a server equipped with an Intel(R) Xeon(R) E5-2620 v3 CPU, 62 GB of RAM, 4 $\times$ NVIDIA GeForce GTX TITAN X GPU (12 GB VRAM each), and CUDA version 12.4.

\subsection{Descriptions of the Datasets}
\label{sec:description}

\textbf{Synthetic Datasets.} The SyntheticZINC dataset \cite{giovanni2024how}, as the name suggests, is a synthetic dataset derived from the ZINC chemical dataset \cite{Irwin2012-wt}, with the dataset size constrained to 12K molecular graphs \cite{dwivedi2022benchmarking}. Specifically, given a molecular graph $\graph$, we set all its nodes' features to $0$, except for two nodes, $i$ and $j\neq i$, whose features are sampled as $x_i,x_j\in\uniform\rb{0,1}$. The graph-level target is computed as $y = \tanh\rb{x_i+x_j}$, \ie learning the task requires a non-linear mixing between the features of nodes $i$ and $j$. These nodes are chosen to induce the desired level of underlying mixing -- given $\alpha\in\sb{0,1}$, the node-pair $\rb{i,j}$ is chosen such that the commute time \cite{Chandra1989TheER} between them is the $\alpha\textsuperscript{th}$ quantile of the distribution of commute times over $\graph$. We analyze the effect of underlying mixing on model performance by varying $\alpha$ as $0.1, 0.2, \ldots, 1.0$. The \inline{MPNN} is chosen to be an L-layer \inline{GCN} with a \inline{MAX}-pooling readout, which encourages the model to learn the mixing by effectively passing messages \cite[Theorem 3.2]{giovanni2024how}. The model depth is set at $L = \max_{\graph} \lr{\lceil}{\text{diam}\rb{\graph}/2}{\rceil} = 11$ to ensure that the \inline{GCN} does not suffer from under-reaching \cite{Barceló2020The,alon2021on}.

\begin{table}[t]

\caption{Statistics of node-classification datasets. 
Homophily measures from \citet{lim2021new}.}
\label{tab:node-datasets}

    \centering
    \renewcommand{\arraystretch}{1.3}
    \begin{tabular}{cccccc}
        \hline
        \textbf{Dataset} & \textbf{Nodes} & \textbf{Edges} & \textbf{Features} & \textbf{Classes} & \textbf{Homophily} \\ \hline
        \multicolumn{6}{c}{\textbf{Homophilic Networks}} \\ \hline
        Cora & 2,708 & 10,556 & 1,433 & 7 & 0.766 \\ \hline
        CiteSeer & 3,327 & 9,104 & 3,703 & 6 & 0.627 \\ \hline
        PubMed & 19,717 & 88,648 & 500 & 3 & 0.664 \\ \hline
        \multicolumn{6}{c}{\textbf{Heterophilic Networks}} \\ \hline
        Chameleon & 2,277 & 36,051 & 2,325 & 5 & 0.062 \\ \hline
        Squirrel & 5,201 & 216,933 & 2,089 & 5 & 0.025 \\ \hline
        Actor & 7,600 & 29,926 & 931 & 5 & 0.011 \\ \hline
        TwitchDE & 9,498 & 306,276 & 128 & 2 & 0.142 \\ \hline
    \end{tabular}
    
\end{table}

\textbf{Node-classification Tasks.} Cora \cite{mccallum2000cora}, CiteSeer \cite{giles1998citeseer} and PubMed \cite{namata2012pubmed} are citation networks -- their nodes represent scientific publications and an edge between two nodes indicates that one of them has cited the other. The features of each publication are represented by a binary vector, where each index indicates whether a specific word from a dictionary is present or absent. Several studies have showed that these datasets have high homophily in node labels \cite{lim2021new,zhu2020heterophily} and that they are modelled much better by shallower networks than by deeper ones \cite{zhou2020towards}. 
Chameleon and Squirrel \cite{musae} are networks of English Wikipedia web pages on the respective topics, and the edges between web pages indicate links between them. The task is to predict the average-monthly traffic on each of the web pages.
The Actor dataset is induced from a larger film-director-actor-writer network \cite{Pei2020Geom-GCN}. It is a network of film actors, with edges between those that occur on the same Wikipedia page, and node features are binary vectors denoting the presence of specific keywords in the corresponding Wikipedia entries. The task is to classify actors into five categories based on the content of their Wikipedia pages.
Finally, TwitchDE \cite{musae} is a network of Twitch users in Germany, with the edges between them representing their mutual follower relationships. The node features are embeddings of the games played by the users, and the task is to predict whether the users use explicit language.

\begin{table}[t]

\caption{Statistics of graph-classification datasets.}
\label{tab:graph-datasets}

    \centering
    \renewcommand{\arraystretch}{1.3}
    \begin{tabular}{cccccc}
        \hline
        \textbf{Dataset} & \textbf{Graphs} & \textbf{Nodes} & \textbf{Edge} & \textbf{Features} & \textbf{Classes} \\ \hline
        Mutag & 188 & ~17.9 & ~39.6 & 7 & 2 \\ \hline
        Proteins & 1,113 & ~39.1 & ~145.6 & 3 & 2 \\ \hline
        Enzymes & 600 & ~32.6 & ~124.3 & 3 & 6 \\ \hline
        Collab & 5,000 & ~74.5 & ~4914.4 & 0 & 3 \\ \hline
        IMDb & 1,000 & ~19.8 & ~193.1 & 0 & 2 \\ \hline
        Reddit & 2,000 & ~429.6 & ~995.5 & 0 & 2 \\ \hline
    \end{tabular}
    
\end{table}

\textbf{Graph-classification Tasks.} Following \citet{black2023resistance,karhadkar2023fosr}, we use the graph-classification datasets introduced in \citet{Morris2020TUD}, which were hypothesized to be long-range tasks. On one hand we have the molecular datasets Mutag, Proteins and Enzymes, and on another we have the social networks Reddit-Binary, IMDb-Binary and Collab. Their statistics are presented in \autoref{tab:graph-datasets}.

Mutag \cite{debnath1991mutag} consists of nitroaromatic compounds, with the objective of predicting their mutagenicity, \ie the ability to cause genetic mutations, in cells in \textit{Salmonella typhimurium}. Each compound is represented as a graph, where nodes correspond to atoms, represented by their type using one-hot encoding, and edges denote the bonds between them.
Proteins \cite{dobson2003proteins,borgwardt2005enzymes} is a collection of proteins classified as enzymes or not. The molecules are represented as a graph of amino acids, with edges between those separated up to 6\AA\ apart. 
The Enzymes \cite{borgwardt2005enzymes,schomburg2004-qg} dataset is represented similarly, and the task is to classify the proteins into one of 6 Enzyme Commission (EC) numbers -- a system to classify enzymes based on the reactions they catalyze.

Collab \cite{yanardag2015dgk} is a scientific collaboration dataset where each graph represents a researcher’s ego network. Nodes correspond to researchers and their collaborators, with edges indicating co-authorship. Each network is labeled based on the researcher’s field, which can be High Energy Physics, Condensed Matter Physics, or Astrophysics.
IMDb-Binary \cite{yanardag2015dgk} is a movie collaboration dataset containing the ego-networks of 1,000 actors and actresses from IMDB. In each graph, nodes represent actors, with edges connecting those who have co-starred in the same film. Each graph is derived from either the Action genre or Romance.
Finally, Reddit-Binary \cite{yanardag2015dgk} comprises graphs representing online discussions on Reddit, where nodes correspond to users and edges indicate interactions through comment responses. Each graph is labeled based on whether it originates from a Q\&A or discussion-based subreddit.

For the molecular datasets, we use the node features supplied by \texttt{PyG} \cite{fey_2019_pyg}, but since they are unavailable for the social networks, we set scalar features $\feature_i=\rb{1}$ for all nodes in these datasets, following \cite{karhadkar2023fosr}.

\subsection{Training Configurations}
\label{sec:config}

\textbf{Dataset Splits.} For the SyntheticZINC task, we use the train-val-test splits provided in \texttt{PyG}. For the homophilic (citation) networks, we use the `full' split \cite{chen2018fastgcn}, as provided in \texttt{PyG}, and for the heterophilic networks, we randomly sample 60\% of the nodes for training, 16\% for validation, and 24\% for testing. On the other hand, for the graph classification tasks, we sample 80\% of the graphs for training, and 10\% each for validation and testing, following \cite{karhadkar2023fosr,black2023resistance}.

\textbf{Model Architecture.} We standardize most of the hyperparameters across all experiments to isolate the effect of random dropping. Specifically, we use symmetric normalization of the adjacency matrix to compute the edge weights for GCN, and we set the number of attentions heads for \inline{GAT} to 2 in order to keep the computational load manageable, while at the same time harnessing the expressiveness of the multi-headed self-attention mechanism. For the SyntheticZINC dataset, we fix the size of the hidden representations at 16, while we fix them to 64 for all the real-world datasets. In all settings, a linear transformation is applied to the node features before message-passing. Afterwards, a bias term is added and then the ReLU nonlinearity is applied. Finally, a linear readout layer is used to compute the regressand (for regression tasks) or logits (for classification).


\textbf{Dropping Probability.} For the synthetic datasets, we experiment with a \inline{NoDrop} baseline, and \inline{DropEdge}, \inline{Dropout} and \inline{DropMessage}, each with $q=0.2$ and $q = 0.5$. For the real-world datasets, the dropping probabilities are varied as $q=0.1, 0.2, \ldots, 0.9$, so as to reliably find the best performing configuration. We adopt the common practice of turning the dropping methods off at test-time ($q=0$), isolating the effects on optimization and generalization, which our theory does not address.

\textbf{DropSens Configurations.} For DropSens, we use 4 possible values for proportion of information preserved over corss-edges, $c=0.5,0.8,0.9,0.95$. Since the dropping probability, $q_i$, increases with the in-degree of the target node, $d_i$, the proportion of all edges dropped could become very high, especially with large $c$. Therefore, we clip the value of $q_i$ by 4 possible choices, $q_i \leq q_{\max} \in \cb{0.2, 0.3, 0.5, 0.8}$. We exclude the following configurations: $\rb{c, q_{\max}} \in \cb{\rb{0.5, 0.2}, \rb{0.5, 0.3}, \rb{0.5, 0.5}, \rb{0.8, 0.2}, \rb{0.8, 0.3}}$, since they use the same dropping probability ($= q_{\max}$) for each edge, and are therefore equivalent to DropEdge. In summary, we test with a total of 11 configurations for DropSens to find the best one for each task.

\textbf{Optimization Algorithm.} The models are trained using the Adam optimizer \cite{KingBa15}. On the SyntheticZINC dataset, the models are trained with a learning rate of $2 \times 10^{-3}$ and a weight decay of $1 \times 10^{-4}$, for a total of 200 epochs. On the real-world datasets, we use a learning rate of $1 \times 10^{-3}$ and no weight decay, following \citet{karhadkar2023fosr,black2023resistance}. Here, we cap the maximum number of epochs at 300. In both cases, the learning rate is reduced by a factor of $1\times 10^{-1}$ if the validation loss fails to improve beyond a relative threshold of $1\times 10^{-4}$ for 10 epochs, again following \citet{karhadkar2023fosr,black2023resistance}.

\textbf{Number of Independent Runs.} We perform only 10 independent runs on the SyntheticZINC dataset due to its consistently low variance in performance, as also observed by \citet{giovanni2024how}. For real-world datasets, we conduct 20 runs to identify the best-performing dropping configurations. We then perform a one-sided t-test to assess whether dropout improves performance, using a 90\% confidence-level ($\alpha=0.1$) and targeting a statistical power of 0.9 ($\beta=0.1$). Under the assumption that the dropping method offers superior performance, detecting a medium effect size of $0.5$ \cite{Cohen2013-im} requires approximately 53 samples per group according to standard power analysis. We round this to 50, and accordingly perform 30 additional runs for the final comparison of the best-performing dropping configuration with the NoDrop baseline.


\subsection{DropSens Implementation}
\label{sec:ds-implementation}

\begin{lstlisting}[language=Python,label=lst:drop_sens,caption=DropSens Implementation]
import warnings
import sympy
from sympy.abc import x as q
import torch
from torch_geometric.utils import degree, contains_self_loops


def drop_sens(
    edge_index: torch.Tensor, 
    c: float, 
    max_drop_prob: float = None
):

	if max_drop_prob is None:
		max_drop_prob = 1.
		
	# Assuming edge index does not have self loops
	if contains_self_loops(edge_index):
		warnings.warn("Degree computation in DropSens assumes absence",
			"of self-loops, but the edge_index passed has them.")
	degrees = degree(edge_index[1]).int()    	# Node index -> in-degree

	ds = torch.unique(degrees).tolist()    		# Sorted array
	mapper = torch.nan * torch.ones(ds[-1]+1)
	mapper[ds] = max_drop_prob    				# Node degree -> dropping prob

	for d_i in ds:
		q_i = float(sympy.N(sympy.real_roots(
			(1-c)*d_i*(1-q) - q + q**(d_i+1))[-2]	# Following Equation 4.1
		)) if d_i > 0 else 0.
		if q_i > max_drop_prob:
			# Because q monontonic wrt d, and ds is sorted
			break
		mapper[d_i] = q_i

	in_degrees = degrees[edge_index[1]]    		# Edge index -> in-degree of target node
	qs = mapper[in_degrees]    					# Edge index -> dropping probability
	edge_mask = qs <= torch.rand(edge_index.size(1))
	edge_index = edge_index[:, edge_mask]

	return edge_index, edge_mask
\end{lstlisting}

In \autoref{lst:drop_sens}, we present the DropSens implementation used in our experiments, relying mainly on \texttt{SymPy} \cite{meurer2017-sympy}.

\begin{figure}
    \centering
    \includegraphics[width=0.5\linewidth]{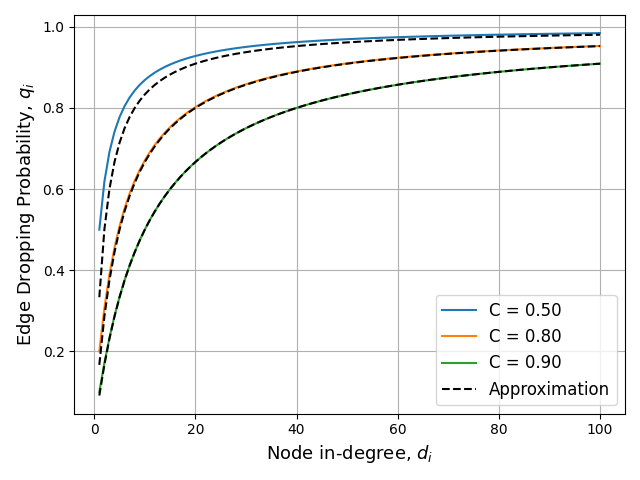}
    \caption{Edge-wise dropping probabilities under DropSens for varying values of $c$, along with corresponding approximations as in \autoref{eqn:drop-sens-approx}.}
\label{fig:dropsens}
\end{figure}

Unfortunately, computing the roots of \autoref{eqn:dropsens} becomes slow when the in-degree $d_i$ is large -- a common scenario in large networks. This issue is especially pronounced when the proportion of information preserved $c$ is large, as the dropping threshold is only met at a higher value of $d_i$. To address this computational challenge, we propose an approximation:
\begin{align}
    1-c = \frac{q_i-q_i^{d_i+1}}{d_i\rb{1-q_i}} \approx \frac{q_i}{d_i\rb{1-q_i}} \quad \Longrightarrow \quad q_i \approx \frac{\rb{1-c}d_i}{1+\rb{1-c}d_i}
\label{eqn:drop-sens-approx}
\end{align}
This approximation becomes increasingly accurate as $c$ increases -- since more information needs to be preserved, $q_i$ needs to be small, and hence, $q_i^{d_i+1} \to 0$.

\autoref{fig:dropsens} shows the DropSens probabilities for masking incoming messages based on the in-degree of the target nodes, along with the corresponding approximations. It is clear to see that the approximation gets increasingly more accurate for lower proportions of information loss.
    \section{Supplementary Experiments}

\subsection{Test Accuracy versus DropEdge Probability}

\begin{figure}
    \centering
    {\includegraphics[width=\linewidth]{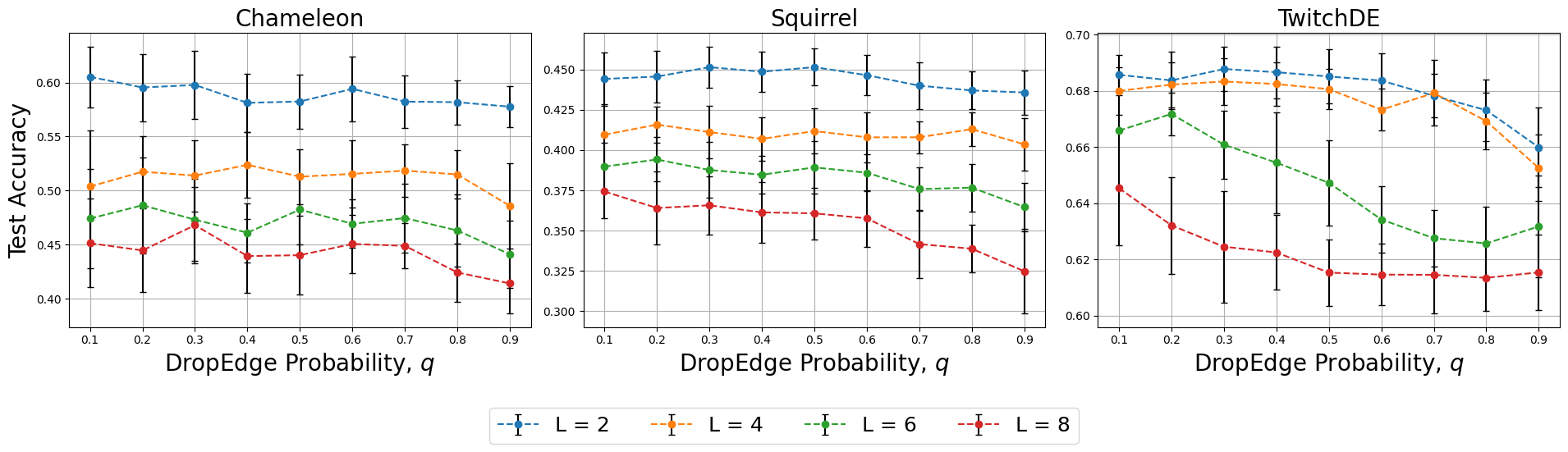}}
    \caption{Dropping probability versus test accuracy of DropEdge-GCN. The theory the explains the contrasting trends as follows: random edge-dropping pushes models to fit to local information during training, which is suitable for short-range tasks, but harms test-time performance in long-range ones.}
    \label{fig:acc-trends}
\end{figure}

In \autoref{sec:theory}, we studied the effect of edge-dropping probability on sensitivity between nodes at different distances. However, this analysis may be insufficient to precisely predict the impact on model performance since DropEdge-variants significantly affect the optimization trajectory as well. To learn more about the relationship between test-time performance and dropping probability, we evaluate \inline{DropEdge}-\inline{GCN}s on the 
heterophilic datasets; the results are shown in \autoref{fig:acc-trends}. Clearly, 
on Chameleon, Squirrel and TwitchDE, the performance degrades with increasing dropping probability, as was suggested by \autoref{thm:sensitivity-l-layer-dec} and \autoref{fig:linear-gcn_asymmetric}. Surprisingly, the trends are significantly monotonic with GCNs of all depths, $L=2,4,6,8$.

\subsection{Remark on DropNode}

\begin{figure}
    \centering
    {\includegraphics[width=\linewidth]{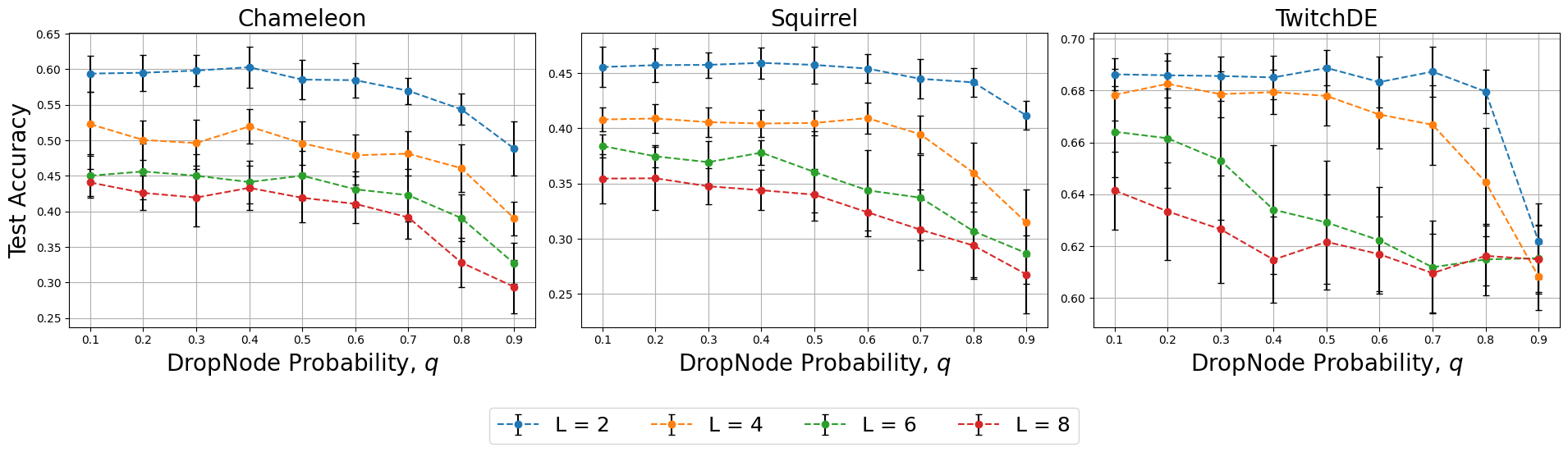}}
    \caption{Dropping probability versus test accuracy of DropNode-GCN.}
    \label{fig:dropnode}
\end{figure}

In \autoref{eqn:dropnode}, we noted that \inline{DropNode} does not suffer from loss in sensitivity. However, those results were in expectation. Moreover, our analysis did not account for the effects on the learning trajectory. In practice, a high DropNode probability would make it hard for information in the node features to reach distant nodes. This would prevent the model from learning to effectively combine information from large neighborhoods, harming generalization. In \autoref{fig:dropnode}, we visualize the relationship between test-time performance and DropNode probability. The performance monotonically decreases with increasing dropping probability, as was observed with DropEdge. 

\subsection{Over-squashing or Under-fitting?}
\label{sec:over-fitting}

\begin{figure}[t]
    \centering
    \includegraphics[width=\linewidth]{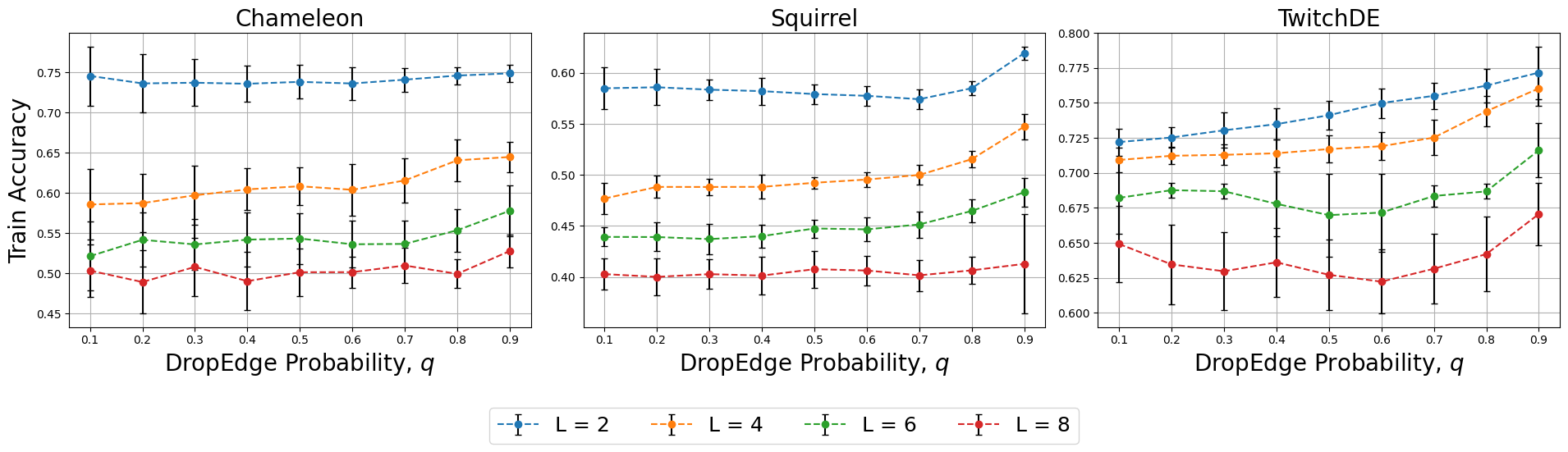}
    \caption{\inline{DropEdge} probability versus training accuracy of GCNs. The training performance improves with $q$, suggesting that the models are not underfitting. Instead, the reason for poor test-time performance (\autoref{fig:acc-trends}) is that models are over-fitting to short-range signals during training, resulting in poor generalization.}
    \label{fig:ablation}
\end{figure}

The results in the previous subsection suggest that using random edge-dropping to regularize model training leads to poor test-time performance. We hypothesize that this occurs because the models struggle to propagate information over long distances, causing node representations to overfit to local neighborhoods. However, a confounding effect is at play: DropEdge variants reduce the generalization gap by preventing overfitting to the training set, \ie poorer training performance. If this regularization is too strong, it could lead to underfitting, which could also explain the poor test-time performance on heterophilic datasets. This concern is particularly relevant because the heterophilic networks are much larger than homophilic ones (see \autoref{tab:node-datasets}), making them more prone to underfitting. To investigate this, we plot the training accuracies of deep DropEdge-GCNs on the heterophilic datasets; \autoref{fig:ablation} shows the results. It is clear that the models do not underfit as the dropping probability increases. In fact, somewhat unexpectedly, the training metrics improve. Together with the results in \autoref{fig:acc-trends}, we conclude that DropEdge-like methods are detrimental in long-range tasks since they cause overfitting to short-range artifacts in the training data, resulting in poor generalization at test-time.
    \section{Supplementary Experimental Results}

\subsection{Performance of GAT with Dropping Methods}

In \autoref{tab:gat-results}, we present the results of experiments in \autoref{sec:real-world}, but with the GAT architecture. For node classification tasks, we see the same dichotomy as in \autoref{tab:node-results}, with dropping methods significantly improving performance on homophilic networks, while being detrimental to performance on heterophilic networks. On graph classification tasks, the dropping methods improve performance, but the improvement (if any) is not statistically significant (in $28/36 \approx 78\%$ cases). Note that the GAT architecture was unable to learn the Collab dataset, \ie the performance in all cases was as good as a random classifier's.

\begin{table}[t]

\caption{Difference in mean test accuracy (\%) between the best performing configuration of each dropout method and the baseline NoDrop model, with GAT as the base model. Cell colors represent p-values from a t-test evaluating whether dropout improves performance.}
\label{tab:gat-results}

\begin{subtable}{\linewidth}
\caption{Node classification tasks.}
    \centering
    \renewcommand{\arraystretch}{1.2}
    \resizebox{0.9\linewidth}{!}{%
    \begin{tabular}{
    c
    >{\centering\arraybackslash}p{1.6cm}
    >{\centering\arraybackslash}p{1.6cm}
    >{\centering\arraybackslash}p{1.6cm}
    >{\centering\arraybackslash}p{1.6cm}
    >{\centering\arraybackslash}p{1.6cm}
    >{\centering\arraybackslash}p{1.6cm}
    }
    \hline
\multirow{2}{*}{\textbf{Dropout}} & \multicolumn{3}{c}{\textbf{Homophilic Networks}} & \multicolumn{3}{c}{\textbf{Heterophilic Networks}} \\ \cline{2-7}
& \textbf{Cora} & \textbf{CiteSeer} & \textbf{PubMed} & \textbf{Chameleon} & \textbf{Squirrel} & \textbf{TwitchDE} \\ \hline
DropEdge & \cellcolor{\positive!99.996} $+0.483$ & \cellcolor{\positive!99.944} $+0.828$ & \cellcolor{\negative!67.308} $-0.064$ & \cellcolor{\negative!98.171} $-1.988$ & \cellcolor{\negative!97.172} $-1.125$ & \cellcolor{\negative!82.603} $-0.206$ \\ \hhline{-------}
DropNode & \cellcolor{\positive!42.955} $+0.200$ & \cellcolor{\negative!44.902} $-0.002$ & \cellcolor{\negative!93.831} $-0.181$ & \cellcolor{\negative!100.000} $-6.090$ & \cellcolor{\negative!99.985} $-2.016$ & \cellcolor{\negative!100.000} $-2.383$ \\ \hhline{-------}
DropAgg & \cellcolor{\positive!87.818} $+0.322$ & \cellcolor{\positive!99.917} $+0.797$ & \cellcolor{\negative!32.810} $+0.032$ & \cellcolor{\negative!100.000} $-7.779$ & \cellcolor{\negative!100.000} $-4.904$ & \cellcolor{\negative!99.976} $-1.014$ \\ \hhline{-------}
DropGNN & \cellcolor{\positive!100.000} $+0.519$ & \cellcolor{\positive!99.999} $+1.058$ & \cellcolor{\negative!90.142} $-0.149$ & \cellcolor{\negative!100.000} $-10.572$ & \cellcolor{\negative!100.000} $-5.214$ & \cellcolor{\negative!99.999} $-1.636$ \\ \hhline{-------}
Dropout & \cellcolor{\positive!99.998} $+0.600$ & \cellcolor{\negative!22.474} $+0.104$ & \cellcolor{\positive!78.531} $+0.265$ & \cellcolor{\negative!100.000} $-7.891$ & \cellcolor{\negative!100.000} $-2.773$ & \cellcolor{\negative!99.999} $-1.598$ \\ \hhline{-------}
DropMessage & \cellcolor{\positive!99.726} $+0.389$ & \cellcolor{\negative!35.494} $+0.039$ & \cellcolor{\positive!99.998} $+0.633$ & \cellcolor{\negative!94.657} $-1.569$ & \cellcolor{\negative!52.865} $-0.092$ & \cellcolor{\negative!71.189} $-0.139$ \\ \hline
    \end{tabular}}
\end{subtable}

\vspace{3mm}

\begin{subtable}{\linewidth}
\caption{Graph classification tasks.}
    \centering
    \renewcommand{\arraystretch}{1.2}
    \resizebox{0.9\linewidth}{!}{%
    \begin{tabular}{
    c
    >{\centering\arraybackslash}p{1.6cm}
    >{\centering\arraybackslash}p{1.6cm}
    >{\centering\arraybackslash}p{1.6cm}
    >{\centering\arraybackslash}p{1.6cm}
    >{\centering\arraybackslash}p{1.6cm}
    >{\centering\arraybackslash}p{1.6cm}
    }
    \hline
\multirow{2}{*}{\textbf{Dropout}} & \multicolumn{3}{c}{\textbf{Molecular Networks}} & \multicolumn{3}{c}{\textbf{Social Networks}} \\ \cline{2-7}
& \textbf{Mutag} & \textbf{Proteins} & \textbf{Enzymes} & \textbf{Reddit} & \textbf{IMDb} & \textbf{Collab} \\ \hline
DropEdge & \cellcolor{\negative!28.983} $+0.900$ & \cellcolor{\negative!60.875} $-0.375$ & \cellcolor{\negative!37.485} $+0.290$ & \cellcolor{\negative!5.317} $+0.550$ & \cellcolor{\negative!51.103} $-0.120$ & \cellcolor{\negative!44.444} $\hspace{2.55mm}0.000$ \\ \hhline{-------}
DropNode & \cellcolor{\negative!36.942} $+0.400$ & \cellcolor{\negative!53.710} $-0.196$ & \cellcolor{\negative!99.333} $-5.085$ & \cellcolor{\positive!99.084} $+1.860$ & \cellcolor{\positive!99.994} $+3.760$ & \cellcolor{\negative!44.444} $\hspace{2.55mm}0.000$ \\ \hhline{-------}
DropAgg & \cellcolor{\negative!23.261} $+1.200$ & \cellcolor{\negative!35.560} $+0.196$ & \cellcolor{\negative!32.449} $+0.519$ & \cellcolor{\negative!13.278} $+0.400$ & \cellcolor{\negative!1.152} $+0.860$ & \cellcolor{\negative!44.444} $\hspace{2.55mm}0.000$ \\ \hhline{-------}
DropGNN & \cellcolor{\negative!13.861} $+1.800$ & \cellcolor{\negative!37.581} $+0.143$ & \cellcolor{\negative!96.751} $-3.658$ & \cellcolor{\negative!56.322} $-0.140$ & \cellcolor{\positive!56.878} $+1.220$ & \cellcolor{\negative!44.444} $\hspace{2.55mm}0.000$ \\ \hhline{-------}
Dropout & \cellcolor{\negative!24.815} $+1.100$ & \cellcolor{\negative!72.435} $-0.679$ & \cellcolor{\negative!100.000} $-8.313$ & \cellcolor{\positive!99.016} $+1.690$ & \cellcolor{\positive!99.989} $+3.340$ & \cellcolor{\negative!44.444} $\hspace{2.55mm}0.000$ \\ \hhline{-------}
DropMessage & \cellcolor{\positive!20.048} $+3.600$ & \cellcolor{\negative!49.059} $-0.107$ & \cellcolor{\negative!95.632} $-3.382$ & \cellcolor{\positive!89.884} $+1.300$ & \cellcolor{\positive!99.265} $+2.680$ & \cellcolor{\negative!44.444} $\hspace{2.55mm}0.000$ \\ \hline
    \end{tabular}}
\end{subtable}

\end{table}

\subsection{Effect Size in Statistical Tests}
\label{sec:hedges-g}

\begin{table}[t]

\caption{Hedges' $g$ statistic for different dataset$-$model$-$dropout combinations. Color-coding for effect size according to \citet{Cohen2013-im}; red denotes negative effect, and green denotes positive effect. Medium to large positive effect sizes in bold.}
\label{tab:hedges-g}

    \centering

    \small{
        \fcolorbox{black}{white}{\phantom{X}}~No effect
        \hspace{2mm}
        \fcolorbox{black}{\negative!20}{\phantom{X}} \fcolorbox{black}{\positive!20}{\phantom{X}}~Small effect
        \hspace{2mm}
        \fcolorbox{black}{\negative!50}{\phantom{X}} \fcolorbox{black}{\positive!50}{\phantom{X}}~Medium effect
        \hspace{2mm}
        \fcolorbox{black}{\negative!80}{\phantom{X}} \fcolorbox{black}{\positive!80}{\phantom{X}}~Large effect
    } \vspace{1mm}

    \renewcommand{\arraystretch}{1.3}
    \resizebox{\linewidth}{!}{%
    \begin{tabular}{cc>{\centering\arraybackslash}p{2.0cm}>{\centering\arraybackslash}p{2.0cm}>{\centering\arraybackslash}p{2.0cm}>{\centering\arraybackslash}p{2.0cm}>{\centering\arraybackslash}p{2.0cm}>{\centering\arraybackslash}p{2.0cm}
    }
    \hline
    \textbf{Dataset} & \textbf{GNN} & \textbf{DropEdge} & \textbf{DropNode} & \textbf{DropAgg} & \textbf{DropGNN} & \textbf{Dropout} & \textbf{DropMessage} \\ \hline

\multirow{3}{*}{Cora} 
 & GCN & \cellcolor{\positive!80} $\mathbf{+0.973}$ & \cellcolor{\positive!80} $\mathbf{+0.759}$ & \cellcolor{\positive!50} $\mathbf{+0.431}$ & \cellcolor{\positive!80} $\mathbf{+0.900}$ & \cellcolor{\positive!80} $\mathbf{+0.990}$ & \cellcolor{white!00} $+0.060$ \\ \hhline{~-------}
 & GIN & \cellcolor{\negative!20} $-0.174$ & \cellcolor{\positive!20} $+0.205$ & \cellcolor{white!00} $+0.068$ & \cellcolor{\negative!20} $-0.212$ & \cellcolor{\positive!80} $\mathbf{+1.010}$ & \cellcolor{\positive!80} $\mathbf{+2.180}$ \\ \hhline{~-------}
 & GAT & \cellcolor{\positive!80} $\mathbf{+0.933}$ & \cellcolor{\positive!20} $+0.317$ & \cellcolor{\positive!50} $\mathbf{+0.455}$ & \cellcolor{\positive!80} $\mathbf{+1.083}$ & \cellcolor{\positive!80} $\mathbf{+0.977}$ & \cellcolor{\positive!80} $\mathbf{+0.710}$ \\ \hline
\multirow{3}{*}{CiteSeer} & GCN & \cellcolor{\positive!80} $\mathbf{+0.848}$ & \cellcolor{\positive!20} $+0.268$ & \cellcolor{\positive!50} $\mathbf{+0.628}$ & \cellcolor{\positive!80} $\mathbf{+0.929}$ & \cellcolor{\positive!20} $+0.155$ & \cellcolor{\negative!50} $-0.397$ \\ \hhline{~-------}
 & GIN & \cellcolor{\negative!20} $-0.254$ & \cellcolor{\positive!50} $\mathbf{+0.515}$ & \cellcolor{\negative!20} $-0.124$ & \cellcolor{\negative!80} $-0.854$ & \cellcolor{white!00} $+0.006$ & \cellcolor{\positive!20} $+0.132$ \\ \hhline{~-------}
 & GAT & \cellcolor{\positive!80} $\mathbf{+0.799}$ & \cellcolor{white!00} $-0.002$ & \cellcolor{\positive!80} $\mathbf{+0.777}$ & \cellcolor{\positive!80} $\mathbf{+0.985}$ & \cellcolor{\positive!20} $+0.103$ & \cellcolor{white!00} $+0.040$ \\ \hline
\multirow{3}{*}{PubMed} & GCN & \cellcolor{\positive!80} $\mathbf{+1.039}$ & \cellcolor{\positive!80} $\mathbf{+1.285}$ & \cellcolor{\negative!50} $-0.627$ & \cellcolor{\positive!80} $\mathbf{+1.455}$ & \cellcolor{\positive!80} $\mathbf{+2.625}$ & \cellcolor{\positive!80} $\mathbf{+2.907}$ \\ \hhline{~-------}
 & GIN & \cellcolor{white!00} $-0.043$ & \cellcolor{\positive!20} $+0.343$ & \cellcolor{\negative!50} $-0.546$ & \cellcolor{\negative!80} $-1.213$ & \cellcolor{\positive!50} $\mathbf{+0.423}$ & \cellcolor{\positive!80} $\mathbf{+1.118}$ \\ \hhline{~-------}
 & GAT & \cellcolor{\negative!20} $-0.108$ & \cellcolor{\negative!20} $-0.320$ & \cellcolor{white!00} $+0.053$ & \cellcolor{\negative!20} $-0.271$ & \cellcolor{\positive!50} $\mathbf{+0.407}$ & \cellcolor{\positive!80} $\mathbf{+0.964}$ \\ \hline
\multirow{3}{*}{Chameleon} & GCN & \cellcolor{\negative!20} $-0.167$ & \cellcolor{\negative!20} $-0.174$ & \cellcolor{\negative!80} $-3.279$ & \cellcolor{\negative!50} $-0.479$ & \cellcolor{\negative!50} $-0.385$ & \cellcolor{\positive!50} $\mathbf{+0.356}$ \\ \hhline{~-------}
 & GIN & \cellcolor{\negative!20} $-0.219$ & \cellcolor{\negative!50} $-0.567$ & \cellcolor{\negative!20} $-0.329$ & \cellcolor{\negative!50} $-0.510$ & \cellcolor{\negative!80} $-0.857$ & \cellcolor{\positive!20} $+0.290$ \\ \hhline{~-------}
 & GAT & \cellcolor{\negative!50} $-0.429$ & \cellcolor{\negative!80} $-1.326$ & \cellcolor{\negative!80} $-1.945$ & \cellcolor{\negative!80} $-2.438$ & \cellcolor{\negative!80} $-1.504$ & \cellcolor{\negative!20} $-0.333$ \\ \hline
\multirow{3}{*}{Squirrel} & GCN & \cellcolor{white!00} $+0.006$ & \cellcolor{\negative!50} $-0.389$ & \cellcolor{\negative!80} $-8.183$ & \cellcolor{\negative!20} $-0.180$ & \cellcolor{white!00} $-0.066$ & \cellcolor{\positive!20} $+0.189$ \\ \hhline{~-------}
 & GIN & \cellcolor{\positive!20} $+0.179$ & \cellcolor{\negative!20} $-0.204$ & \cellcolor{\positive!20} $+0.199$ & \cellcolor{white!00} $-0.052$ & \cellcolor{\negative!20} $-0.111$ & \cellcolor{white!00} $+0.085$ \\ \hhline{~-------}
 & GAT & \cellcolor{\negative!50} $-0.393$ & \cellcolor{\negative!80} $-0.750$ & \cellcolor{\negative!80} $-1.899$ & \cellcolor{\negative!80} $-2.251$ & \cellcolor{\negative!80} $-1.165$ & \cellcolor{white!00} $-0.038$ \\ \hline
\multirow{3}{*}{TwitchDE} & GCN & \cellcolor{white!00} $-0.099$ & \cellcolor{\negative!20} $-0.113$ & \cellcolor{\negative!80} $-2.272$ & \cellcolor{\negative!50} $-0.426$ & \cellcolor{\negative!20} $-0.280$ & \cellcolor{\positive!20} $+0.165$ \\ \hhline{~-------}
 & GIN & \cellcolor{white!00} $-0.079$ & \cellcolor{\negative!20} $-0.116$ & \cellcolor{\positive!20} $+0.140$ & \cellcolor{white!00} $-0.005$ & \cellcolor{\negative!50} $-0.389$ & \cellcolor{white!00} $-0.016$ \\ \hhline{~-------}
 & GAT & \cellcolor{\negative!20} $-0.201$ & \cellcolor{\negative!80} $-1.206$ & \cellcolor{\negative!80} $-0.726$ & \cellcolor{\negative!80} $-0.938$ & \cellcolor{\negative!80} $-0.921$ & \cellcolor{\negative!20} $-0.129$ \\ \hline
\multirow{3}{*}{Mutag} & GCN & \cellcolor{white!00} $-0.087$ & \cellcolor{\negative!50} $-0.448$ & \cellcolor{white!00} $-0.069$ & \cellcolor{white!00} $-0.017$ & \cellcolor{white!00} $-0.026$ & \cellcolor{\positive!20} $+0.174$ \\ \hhline{~-------}
 & GIN & \cellcolor{\negative!20} $-0.107$ & \cellcolor{\negative!50} $-0.351$ & \cellcolor{\negative!20} $-0.330$ & \cellcolor{\negative!50} $-0.640$ & \cellcolor{\negative!20} $-0.240$ & \cellcolor{\negative!20} $-0.342$ \\ \hhline{~-------}
 & GAT & \cellcolor{white!00} $+0.071$ & \cellcolor{white!00} $+0.034$ & \cellcolor{white!00} $+0.099$ & \cellcolor{\positive!20} $+0.151$ & \cellcolor{white!00} $+0.091$ & \cellcolor{\positive!20} $+0.281$ \\ \hline
\multirow{3}{*}{Proteins} & GCN & \cellcolor{\positive!20} $+0.339$ & \cellcolor{\positive!50} $\mathbf{+0.450}$ & \cellcolor{\positive!20} $+0.260$ & \cellcolor{\positive!20} $+0.307$ & \cellcolor{\positive!50} $\mathbf{+0.397}$ & \cellcolor{\positive!50} $\mathbf{+0.477}$ \\ \hhline{~-------}
 & GIN & \cellcolor{\negative!20} $-0.331$ & \cellcolor{\negative!50} $-0.541$ & \cellcolor{\negative!50} $-0.352$ & \cellcolor{\negative!50} $-0.491$ & \cellcolor{\negative!80} $-0.657$ & \cellcolor{\negative!20} $-0.217$ \\ \hhline{~-------}
 & GAT & \cellcolor{white!00} $-0.076$ & \cellcolor{white!00} $-0.042$ & \cellcolor{white!00} $+0.040$ & \cellcolor{white!00} $+0.031$ & \cellcolor{\negative!20} $-0.136$ & \cellcolor{white!00} $-0.021$ \\ \hline
\multirow{3}{*}{Enzymes} & GCN & \cellcolor{white!00} $-0.072$ & \cellcolor{\negative!20} $-0.272$ & \cellcolor{white!00} $-0.055$ & \cellcolor{\negative!50} $-0.359$ & \cellcolor{\negative!80} $-0.758$ & \cellcolor{\negative!50} $-0.543$ \\ \hhline{~-------}
 & GIN & \cellcolor{\negative!80} $-0.673$ & \cellcolor{white!00} $-0.070$ & \cellcolor{white!00} $-0.083$ & \cellcolor{\negative!50} $-0.548$ & \cellcolor{\negative!20} $-0.290$ & \cellcolor{white!00} $-0.048$ \\ \hhline{~-------}
 & GAT & \cellcolor{white!00} $+0.031$ & \cellcolor{\negative!50} $-0.508$ & \cellcolor{white!00} $+0.055$ & \cellcolor{\negative!50} $-0.380$ & \cellcolor{\negative!80} $-0.960$ & \cellcolor{\negative!50} $-0.353$ \\ \hline
\multirow{3}{*}{Reddit} & GCN & \cellcolor{\negative!80} $-1.712$ & \cellcolor{\negative!80} $-1.333$ & \cellcolor{\negative!80} $-3.075$ & \cellcolor{\negative!80} $-2.360$ & \cellcolor{\negative!80} $-1.065$ & \cellcolor{\negative!80} $-1.527$ \\ \hhline{~-------}
 & GIN & \cellcolor{\negative!50} $-0.531$ & \cellcolor{white!00} $+0.095$ & \cellcolor{\positive!20} $+0.163$ & \cellcolor{\positive!20} $+0.240$ & \cellcolor{\positive!50} $\mathbf{+0.503}$ & \cellcolor{\positive!20} $+0.165$ \\ \hhline{~-------}
 & GAT & \cellcolor{\positive!20} $+0.209$ & \cellcolor{\positive!50} $\mathbf{+0.638}$ & \cellcolor{\positive!20} $+0.154$ & \cellcolor{white!00} $-0.054$ & \cellcolor{\positive!50} $\mathbf{+0.632}$ & \cellcolor{\positive!50} $\mathbf{+0.469}$ \\ \hline
\multirow{3}{*}{IMDb} & GCN & \cellcolor{\positive!20} $+0.246$ & \cellcolor{\positive!50} $\mathbf{+0.513}$ & \cellcolor{\positive!50} $\mathbf{+0.642}$ & \cellcolor{\positive!20} $+0.281$ & \cellcolor{\positive!20} $+0.200$ & \cellcolor{\positive!20} $+0.240$ \\ \hhline{~-------}
 & GIN & \cellcolor{\negative!50} $-0.364$ & \cellcolor{white!00} $-0.024$ & \cellcolor{\negative!20} $-0.327$ & \cellcolor{\negative!80} $-0.932$ & \cellcolor{\negative!20} $-0.204$ & \cellcolor{white!00} $-0.090$ \\ \hhline{~-------}
 & GAT & \cellcolor{white!00} $-0.030$ & \cellcolor{\positive!80} $\mathbf{+0.917}$ & \cellcolor{\positive!20} $+0.245$ & \cellcolor{\positive!20} $+0.344$ & \cellcolor{\positive!80} $\mathbf{+0.885}$ & \cellcolor{\positive!80} $\mathbf{+0.650}$ \\ \hline
\multirow{3}{*}{Collab} & GCN & \cellcolor{\negative!20} $-0.195$ & \cellcolor{\negative!80} $-0.859$ & \cellcolor{\negative!80} $-6.962$ & \cellcolor{\positive!20} $+0.140$ & \cellcolor{\negative!20} $-0.348$ & \cellcolor{white!00} $-0.065$ \\ \hhline{~-------}
 & GIN & \cellcolor{\negative!20} $-0.220$ & \cellcolor{\positive!50} $\mathbf{+0.396}$ & \cellcolor{\negative!80} $-0.916$ & \cellcolor{\negative!80} $-1.139$ & \cellcolor{white!00} $-0.071$ & \cellcolor{\positive!50} $\mathbf{+0.396}$ \\ \hhline{~-------}
 & GAT & \cellcolor{white!00} $+0.000$ & \cellcolor{white!00} $+0.000$ & \cellcolor{white!00} $+0.000$ & \cellcolor{white!00} $+0.000$ & \cellcolor{white!00} $+0.000$ & \cellcolor{white!00} $+0.000$ \\ \hline
 
    \end{tabular}}
    
\end{table}

The reliance on p-values as a measure of statistical significance has been widely criticized due to its limitations in conveying the magnitude of an effect. Although a low p-value indicates that an observed difference is unlikely to have occurred under the null hypothesis, it does not provide information about the practical significance of the result. A statistically significant effect may be too small to be meaningful in real-world applications, while a non-significant result does not necessarily imply the absence of a meaningful effect, particularly when sample sizes are small. These concerns have led to an increased emphasis on effect size measures, which quantify the magnitude of differences independently of sample size.

One widely used measure of effect size is Cohen's $d$ statistic \cite{Cohen2013-im}, which standardizes the difference between two group means by dividing by the pooled standard deviation:
\begin{align}
    d &= \frac{\bar{x}_1-\bar{x}_2}{s_p} \\
    s_p &= \sqrt{\frac{\rb{n_1-1}s_1^2 + \rb{n_2-1}s_2^2}{n_1+n_2-2}}
\end{align}
where $\bar{x}_1$ and $\bar{x}_2$ are sample means, $s_1^2$ and $s_2^2$ are unbiased sample variance, and $n_1$ and $n_2$ are the sizes of the samples. However, Cohen's $d$ assumes that the sample standard deviation is an unbiased estimator of the population standard deviation. In small samples, this assumption does not hold, as the sample standard deviation tends to underestimate the true population variability. To address this bias, Hedges' $g$ statistic \cite{Hedges1981} introduces a correction factor that adjusts Cohen's $d$ statistic for small sample sizes:
\begin{align}
    g \approx \rb{1 - \frac{3}{4\rb{n_1+n_2}-9}} d
\end{align}

\citet{Cohen2013-im} suggested that an effect size of $0.2$ be considered small, $0.5$ be considered medium, and $0.8$ be considered large. In \autoref{tab:hedges-g}, we present Hedges' $g$ statistic for the statistical tests in \autoref{sec:real-world}. We can clearly see that for homophilic datasets, there is a strong \emph{positive effect} of using the dropping methods, but for heterophilic datasets and graph classification datasets, there is \textit{at most} a small positive effect of using the dropping methods; rather, in most cases there is a \textit{negative effect}.

\subsection{Performance of GIN with DropSens}
\label{sec:gin-drop-sens}

\begin{table}[t]

\caption{Performance of GIN with different rewiring methods in graph-classification tasks, following \cite{karhadkar2023fosr,black2023resistance}. \first{First}, \second{second}, and \third{third} best results are colored.}
\label{tab:graph-drop-sens-gin}

    \centering
    \renewcommand{\arraystretch}{1.3}
    \resizebox{0.9\linewidth}{!}{%
    \begin{tabular}{
    >{\centering\arraybackslash}p{1.6cm}
    >{\centering\arraybackslash}p{1.6cm}
    >{\centering\arraybackslash}p{1.6cm}
    >{\centering\arraybackslash}p{1.6cm}
    >{\centering\arraybackslash}p{1.6cm}
    >{\centering\arraybackslash}p{1.6cm}
    >{\centering\arraybackslash}p{1.6cm}
    }
    \hline
    \textbf{Rewiring} & \textbf{Mutag} & \textbf{Proteins} & \textbf{Enzymes} & \textbf{Reddit} & \textbf{IMDb} & \textbf{Collab} \\ \hline
None     & $77.70$ & $70.80$ & $33.80$ & $86.79$ & $70.18$ & $\third{72.99}$ \\ \hline
Last FA  & $\first{83.45}$ & $\third{72.30}$ & $\first{47.40}$ & $\first{90.22}$ & $\third{70.91}$ & $\first{75.06}$ \\ \hline
Every FA & $72.55$ & $70.38$ & $28.38$ & $50.36$ & $49.16$ & $32.89$ \\ \hline
DIGL     & $\second{79.70}$ & $70.76$ & $\third{35.72}$ & $76.04$ & $64.39$ & $54.50$ \\ \hline
SDRF     & $\third{78.40}$ & $69.81$ & $\second{35.82}$ & $86.44$ & $69.72$ & $72.96$ \\ \hline
FoSR     & $78.00$ & $\first{75.11}$ & $29.20$ & $\second{87.35}$ & $\second{71.21}$ & $\second{73.28}$ \\ \hline
GTR      & $77.60$ & $\second{73.13}$ & $30.57$ & $\third{86.98}$ & $\first{71.28}$ & $72.93$ \\ \hhline{=======}
DropSens & $70.60$ & $68.00$ & $29.56$ & $76.44$ & $62.48$ & $65.77$ \\ \hline
    \end{tabular}}
    
\end{table}

In \autoref{sec:drop-sens-gcn}, we showed that when modelling long-range graph-classification tasks using GCNs, DropSens outperforms state-of-the-art graph-rewiring techniques designed for alleviating over-squashing. However, it does not perform as well with GIN, as can be seen in \autoref{tab:graph-drop-sens-gin} -- unsurprising, since DropSens was specifically designed to work with GCN's message-passing scheme. This highlights the main limitation of DropSens, necessitating architecture-specific alteration to the edge-dropping strategy, which is not practical in general.

\subsection{Best-performing Dropping Probabilities}
\label{sec:best-prob}

\begin{table}[t]

\caption{Best performing dropout configuration -- $q_{\max}$ and $c$ for DropSens, and $q$ for other dropping methods.}
\label{tab:best-prob}

    \centering
    \renewcommand{\arraystretch}{1.3}
    \resizebox{\textwidth}{!}{%
    \begin{tabular}{cc>{\centering\arraybackslash}p{2.0cm}>{\centering\arraybackslash}p{2.0cm}>{\centering\arraybackslash}p{2.0cm}>{\centering\arraybackslash}p{2.0cm}>{\centering\arraybackslash}p{2.0cm}>{\centering\arraybackslash}p{2.0cm}>{\centering\arraybackslash}p{2.0cm}
    }
    \hline
    \textbf{GNN} & \textbf{Dataset} & \textbf{DropEdge} & \textbf{DropNode} & \textbf{DropAgg} & \textbf{DropGNN} & \textbf{Dropout} & \textbf{DropMessage} & \textbf{DropSens} \\ \hline
\multirow{3}{*}{Cora} & GCN & $0.7$ & $0.4$ & $0.1$ & $0.7$ & $0.7$ & $0.1$ & $0.5$, $0.95$ \\ \hhline{~--------}
 & GIN & $0.1$ & $0.2$ & $0.1$ & $0.1$ & $0.3$ & $0.5$ & $0.2$, $0.90$ \\ \hhline{~--------}
 & GAT & $0.8$ & $0.3$ & $0.9$ & $0.4$ & $0.7$ & $0.6$ & $0.8$, $0.50$ \\ \hline 
\multirow{3}{*}{CiteSeer} & GCN & $0.9$ & $0.1$ & $0.9$ & $0.8$ & $0.1$ & $0.3$ & $0.2$, $0.95$ \\ \hhline{~--------}
 & GIN & $0.1$ & $0.4$ & $0.1$ & $0.1$ & $0.1$ & $0.2$ & $0.3$, $0.95$ \\ \hhline{~--------}
 & GAT & $0.9$ & $0.2$ & $0.9$ & $0.8$ & $0.1$ & $0.1$ & $0.8$, $0.50$ \\ \hline 
\multirow{3}{*}{PubMed} & GCN & $0.3$ & $0.4$ & $0.1$ & $0.2$ & $0.5$ & $0.7$ & $0.5$, $0.95$ \\ \hhline{~--------}
 & GIN & $0.1$ & $0.1$ & $0.1$ & $0.1$ & $0.1$ & $0.6$ & $0.2$, $0.90$ \\ \hhline{~--------}
 & GAT & $0.1$ & $0.1$ & $0.1$ & $0.1$ & $0.5$ & $0.8$ & $0.3$, $0.95$ \\ \hline 
\multirow{3}{*}{Chameleon} & GCN & $0.4$ & $0.1$ & $0.1$ & $0.1$ & $0.4$ & $0.5$ & $0.5$, $0.80$ \\ \hhline{~--------}
 & GIN & $0.1$ & $0.1$ & $0.2$ & $0.1$ & $0.1$ & $0.1$ & $0.2$, $0.95$ \\ \hhline{~--------}
 & GAT & $0.1$ & $0.1$ & $0.1$ & $0.1$ & $0.2$ & $0.1$ & $0.2$, $0.95$ \\ \hline 
\multirow{3}{*}{Squirrel} & GCN & $0.2$ & $0.6$ & $0.9$ & $0.3$ & $0.5$ & $0.6$ & $0.5$, $0.90$ \\ \hhline{~--------}
 & GIN & $0.2$ & $0.5$ & $0.2$ & $0.5$ & $0.3$ & $0.7$ & $0.2$, $0.95$ \\ \hhline{~--------}
 & GAT & $0.1$ & $0.1$ & $0.1$ & $0.1$ & $0.1$ & $0.1$ & $0.2$, $0.95$ \\ \hline 
\multirow{3}{*}{TwitchDE} & GCN & $0.3$ & $0.2$ & $0.9$ & $0.1$ & $0.1$ & $0.6$ & $0.3$, $0.95$ \\ \hhline{~--------}
 & GIN & $0.2$ & $0.3$ & $0.1$ & $0.2$ & $0.1$ & $0.5$ & $0.2$, $0.90$ \\ \hhline{~--------}
 & GAT & $0.1$ & $0.1$ & $0.1$ & $0.1$ & $0.1$ & $0.1$ & $0.2$, $0.90$ \\ \hline 
\multirow{3}{*}{Actor} & GCN & $0.9$ & $0.1$ & $0.9$ & $0.5$ & $0.2$ & $0.1$ & $0.2$, $0.95$ \\ \hhline{~--------}
 & GIN & $0.5$ & $0.1$ & $0.2$ & $0.3$ & $0.6$ & $0.2$ & $0.2$, $0.90$ \\ \hhline{~--------}
 & GAT & $0.9$ & $0.2$ & $0.8$ & $0.7$ & $0.1$ & $0.1$ & $0.8$, $0.50$ \\ \hline 
\multirow{3}{*}{Mutag} & GCN & $0.5$ & $0.8$ & $0.7$ & $0.2$ & $0.2$ & $0.3$ & $0.5$, $0.80$ \\ \hhline{~--------}
 & GIN & $0.1$ & $0.3$ & $0.1$ & $0.6$ & $0.2$ & $0.5$ & $0.5$, $0.95$ \\ \hhline{~--------}
 & GAT & $0.4$ & $0.1$ & $0.2$ & $0.1$ & $0.4$ & $0.4$ & $0.5$, $0.95$ \\ \hline 
\multirow{3}{*}{Proteins} & GCN & $0.1$ & $0.1$ & $0.1$ & $0.2$ & $0.1$ & $0.3$ & $0.8$, $0.90$ \\ \hhline{~--------}
 & GIN & $0.1$ & $0.3$ & $0.1$ & $0.1$ & $0.4$ & $0.1$ & $0.5$, $0.95$ \\ \hhline{~--------}
 & GAT & $0.4$ & $0.1$ & $0.2$ & $0.1$ & $0.1$ & $0.1$ & $0.8$, $0.95$ \\ \hline 
\multirow{3}{*}{Enzymes} & GCN & $0.5$ & $0.1$ & $0.1$ & $0.6$ & $0.1$ & $0.4$ & $0.8$, $0.80$ \\ \hhline{~--------}
 & GIN & $0.3$ & $0.1$ & $0.1$ & $0.1$ & $0.1$ & $0.1$ & $0.8$, $0.95$ \\ \hhline{~--------}
 & GAT & $0.1$ & $0.1$ & $0.1$ & $0.1$ & $0.1$ & $0.1$ & $0.5$, $0.90$ \\ \hline 
\multirow{3}{*}{Reddit} & GCN & $0.1$ & $0.1$ & $0.1$ & $0.1$ & $0.2$ & $0.1$ & $0.2$, $0.95$ \\ \hhline{~--------}
 & GIN & $0.1$ & $0.1$ & $0.1$ & $0.1$ & $0.1$ & $0.1$ & $0.2$, $0.95$ \\ \hhline{~--------}
 & GAT & $0.9$ & $0.7$ & $0.6$ & $0.7$ & $0.6$ & $0.9$ & $0.8$, $0.95$ \\ \hline 
\multirow{3}{*}{IMDb} & GCN & $0.6$ & $0.7$ & $0.1$ & $0.6$ & $0.1$ & $0.9$ & $0.8$, $0.50$ \\ \hhline{~--------}
 & GIN & $0.1$ & $0.3$ & $0.1$ & $0.1$ & $0.2$ & $0.2$ & $0.2$, $0.95$ \\ \hhline{~--------}
 & GAT & $0.7$ & $0.7$ & $0.1$ & $0.6$ & $0.9$ & $0.6$ & $0.2$, $0.95$ \\ \hline 
\multirow{3}{*}{Collab} & GCN & $0.1$ & $0.1$ & $0.4$ & $0.1$ & $0.1$ & $0.1$ & $0.2$, $0.95$ \\ \hhline{~--------}
 & GIN & $0.1$ & $0.2$ & $0.1$ & $0.1$ & $0.2$ & $0.1$ & $0.2$, $0.90$ \\ \hhline{~--------}
 & GAT & $0.1$ & $0.1$ & $0.1$ & $0.1$ & $0.1$ & $0.1$ & $0.2$, $0.90$ \\ \hline 
    \end{tabular}}
    
\end{table}

For the real-world datasets in \autoref{sec:real-world}, we report the best performing dropping probability for different dataset$-$model$-$dropout combinations in \autoref{tab:best-prob}.
\end{appendix}

\end{document}